\pgfplotsset{compat=1.14}
\theoremstyle{plain}
\newtheorem{thm}{\protect\theoremname}
\theoremstyle{plain}
\newtheorem{prop}[thm]{\protect\propositionname}
\newtheorem{defn}{Definition}
\newtheorem{lem}{Lemma}
\newtheorem{cor}{Corollary}
\providecommand{\propositionname}{Proposition}
\providecommand{\theoremname}{Theorem}
\newcolumntype{C}[1]{>{\centering\arraybackslash}p{#1}}
\newcommand{\C}{{\ensuremath \mathbb C}}
\newcommand{\Cnn}{\C^{n\times n}}
\newcommand{\yt}{\textit{YouTube-8M}\xspace}
\newcommand{\relu}{ReLU\xspace}
\newcommand{\drn}{Deep ReLU network\xspace}
\newcommand{\ci}{\mathbf{i}}
\begin{document}

\title{Understanding and Training Deep Diagonal Circulant Neural Networks}

\author{
Alexandre Araujo\institute{Wavestone,\ Paris, France, email: alexandre.araujo@dauphine.eu}$^{\ ,2}$ \quad 
Benjamin Negrevergne\institute{Université Paris-Dauphine,\ PSL Research University,\ LAMSADE,\ CNRS,\ UMR 7243,\ Paris,\ France} \quad Yann Chevaleyre$^2$ \quad Jamal Atif$^2$}

\maketitle

\begin{abstract}
In this paper, we study deep diagonal circulant neural networks, that is deep neural networks in which weight matrices are the product of diagonal and circulant ones.
Besides making a theoretical analysis of their expressivity, we introduced principled techniques for training these models: we devise an initialization scheme and proposed a smart use of non-linearity functions in order to train deep diagonal circulant networks. 
Furthermore, we show that these networks outperform recently introduced deep networks with other types of structured layers. We conduct a thorough experimental study to compare the performance of deep diagonal circulant networks with state of the art models based on structured matrices and with dense models. We show that our models achieve better accuracy than other structured approaches while required 2x fewer weights as the next best approach. Finally we train deep diagonal circulant networks to build a compact and accurate models on a real world video classification dataset with over 3.8 million training examples. 
\end{abstract}

\section{Introduction}
\label{introduction}

The deep learning revolution has yielded models of increasingly large size. 
In recent years, designing compact and accurate neural networks with a small  number of trainable parameters  has been an active research topic, motivated by practical applications in embedded systems (to reduce memory footprint \cite{43969}), federated and distributed learning (to reduce communication \cite{45648}), derivative-free optimization in reinforcement learning (to simplify the computation of the approximated gradient \cite{47028}). Besides a number of practical applications, it is also an important research question whether or not models really need to be this big or if smaller results can achieve similar accuracy~\cite{ba2014deep}.

Structured matrices are at the very core of most of the work on compact networks. In these models, dense weight matrices are replaced by matrices with a prescribed structure (e.g. low rank matrices, Toeplitz matrices, circulant matrices, LDR, etc.). Despite substantial efforts  (e.g. \cite{cheng,moczulski2015acdc}), the performance of compact models is still far from achieving an acceptable accuracy motivating their use in real-world scenarios. This raises several questions about the effectiveness of such models and about our ability to train them. In particular two main questions call for investigation:
\begin{itemize}
    \item[]{\bf Q1\ }{\em  How to efficiently train deep neural networks with a large number of structured layers?}
    \item[]{\bf Q2\ }{\em What is the expressive power of structured layers compared to dense layers?}
\end{itemize}

In this paper, we provide principled answers to these questions for the particular case of deep neural networks based on diagonal and circulant matrices (a.k.a. Diagonal-circulant networks or DCNNs). 

The idea of using diagonal and circulant matrices together comes from a series of results in linear algebra by Muller et al. \cite{muller1998algorithmic} and Huhtanen et al. \cite{Huhtanen2015}. The most recent result from Huhtanen et al.  \cite{Huhtanen2015} demonstrates that any matrix $A$ in $\Cnn$ can be decomposed into the product of $2n-1$ alternating diagonal and circulant matrices.
The diagonal-circulant decomposition inspired Moczulski et al.  \cite{moczulski2015acdc} to design the {\em AFDF} structured layer, which is the building block of DCNNs. However, they were not able to train deep neural networks based on AFDF. 

To answer {\bf Q1}, we first describe a theoretically sound initialization procedure for DCNN which allows the signal to propagate through the network without vanishing or exploding. Furthermore, we provide a number of empirical insights to explain the behaviour of DCNNs, and show the impact of the number of the non-linearities in the network on the convergence rate and the accuracy of the network. 
By combining all these insights, we are able (for the first time) to train large and deep DCNNs. We demonstrate the good performance of DCNNs on a large scale application (the \yt video classification problem) and obtain very competitive accuracy. 

To answer {\bf Q2}, we propose an analysis of the expressivity of DCNNs by extending the results by Huhtanen et al. \cite{Huhtanen2015}. We introduce a new bound on the number of  diagonal-circulant required to approximate a matrix that depends on its rank. Building on this result, we demonstrate that a DCNN with bounded width and small depth can approximate any dense networks with ReLU activations. 

\paragraph{Outline of the paper:} We present in Section~\ref{related_work} the related work on structured neural networks and several compression techniques. Section~\ref{section:circulant} introduces circulant matrices, our new result extending the one from Huhtanen et al. \cite{Huhtanen2015}. Section~\ref{sec-dcnn-th} proposes an theoretical analysis on the expressivity on DCNNs. Section~\ref{section:training} describes two efficient techniques for training  deep diagonal circulant neural networks. Finally, Section~\ref{section:exp} presents extensive experiments to compare the performance of deep diagonal circulant neural networks in different settings w.r.t. other state of the art approaches. Section~\ref{section:conclusion} provides a discussion and concluding remarks.

\section{Related Work}
\label{related_work}

Structured matrices exhibit a number of good properties which have been exploited by deep learning practitioners, mainly to compress large neural networks architectures into smaller ones. For example Hinrichs et al. \cite{hinrichs2011johnson} have demonstrated that a single circulant matrix can be used to approximate the Johson-Lindenstrauss transform, often used in machine learning to perform dimensionality reduction. Building upon this result, Cheng et al. \cite{cheng} proposed to replace the weight matrix of a fully connected layer by a circulant matrix effectively replacing the complex transform modeled by the fully connected layer by a simple dimensionality reduction. Despite the reduction of expressivity, the resulting network demonstrated good accuracy using only a fraction of its original size (90\% reduction).

\textbf{Comparison with ACDC.}
Moczulski et al. \cite{moczulski2015acdc} have introduced two {\em Structured Efficient Linear Layers} (SELL) called AFDF and ACDC. The AFDF structured layer benefits from the theoretical results introduced by Huhtanen et al. \cite{Huhtanen2015} and can be seen the building block of DCNNs.
However, Moczulski et al. \cite{moczulski2015acdc} only experiment using  ACDC, a different type of layer that does not involve circulant matrices. As far as we can tell, the theoretical guarantees available for the AFDF layer do not apply on the ACDC layer since the cosine transform does not diagonalize circulant matrices \cite{sanchez1995diagonalizing}. Another possible limit of the ACDC paper is that they only train large neural networks involving ACDC layers combined with many other expressive layers. Although the resulting network demonstrates good accuracy, it is difficult the characterize the true contribution of the ACDC layers in this setting. 

\textbf{Comparison with Low displacement rank structures.} More recently, Thomas et al. \cite{Thomas_NIPS2018_8119} have generalized these works by proposing neural networks with low-displacement rank matrices (LDR), that are structured matrices encompassing a large family of structured matrices, including Toeplitz-like, Vandermonde-like, Cauchy-like and more notably DCNNs. To obtain this result, LDR represents a structured matrix using two displacement operators and a low-rank residual. Despite being elegant and general, we found that the LDR framework suffers from several limits which are inherent to its generality, and makes it difficult to use in the context of large and deep neural networks. First, the training procedure for learning LDR matrices is highly involved and implies many complex mathematical objects such as Krylov matrices. Then, as acknowledged by the authors, the number of parameters required to represent a given structured matrix (e.g. a Toeplitz matrix) in practice is unnecessarily high (higher than required in theory). 

\textbf{Other compression techniques.} Besides structured matrices, a variety of techniques have been proposed to build more compact deep learning models. These include {\em model distillation}~\cite{44873}, Tensor Train~\cite{novikov2015tensorizing}, Low-rank decomposition~\cite{NIPS2013_5025}, to mention a few. However, Circulant networks show good performances in several contexts (the interested reader can refer to the results reported by Moczulski et al. \cite{moczulski2015acdc} and Thomas et al. \cite{Thomas_NIPS2018_8119}).

\section{A primer on circulant matrices and a new result}
\label{section:circulant}

An n-by-n circulant matrix $C$ is a matrix where each row is a cyclic right shift of the previous one as illustrated below.

{\small \[
    C = circ(c) =\left[\begin{array}{ccccc}
    c_{0} & c_{n-1} & c_{n-2} & \dots & c_{1} \\
    c_{1} & c_{0} & c_{n-1} & & c_{2} \\
    c_{2} & c_{1} & c_{0}& & c_{3} \\
    \vdots & & & \ddots & \vdots \\
    c_{n-1} & c_{n-2} & c_{n-3} & & \phantom{0}c_{0}\phantom{0}
    \end{array}\right]
\]}

\noindent
Circulant matrices exhibit several interesting properties from the perspective of numerical computations. Most importantly, any $n$-by-$n$ circulant matrix $C$ can be represented using only $n$ coefficients instead of the $n^2$ coefficients required to represent classical unstructured matrices. In addition, the matrix-vector product is simplified from $O(n^2)$ to $O(n\ log(n))$ using the  convolution theorem.

As we will show in this paper, circulant matrices also have a strong expressive power. So far, we know that a single circulant matrix can be used to represent a variety of important linear transforms such as random projections~\cite{hinrichs2011johnson}. 
When they are combined with diagonal matrices, they can also be used as building blocks to represent any linear transform~\cite{schmid2000decomposing, Huhtanen2015} with an arbitrary precision. Huhtanen et al. \cite{Huhtanen2015} were able to bound the number of factors that is required to approximate any matrix $A$ with arbitrary precision.

\paragraph{Relation between diagonal circulant matrices and low rank matrices}
We recall this result in  Theorem~\ref{thm:huhtanen} as it is the starting point of our theoretical analysis (note that in the rest of the paper, $\left\Vert\ \cdot\ \right\Vert $ denotes the $\ell_{2}$ norm when applied to vectors, and the operator norm when applied to matrices).

\begin{thm}
(Reformulation from Huhtanen et al. \cite{Huhtanen2015})
\label{thm:huhtanen}
For every matrix $A\in\mathbb{C}^{n\times n}$, for any $\epsilon > 0$, there exists a sequence of matrices $B_1 \ldots B_{2n-1}$ where $B_{i}$ is a circulant matrix if $i$ is odd, and a diagonal matrix otherwise, such that $\left\Vert B_{1}B_{2}\ldots B_{2n-1}-A \right\Vert < \epsilon$.
\end{thm}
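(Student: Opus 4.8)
The plan is to realise the set of admissible products as the image of a single polynomial map and then show that this image is dense. Writing $B_i=\operatorname{circ}(b^{(i)})$ for $i$ odd and $B_i=\operatorname{diag}(b^{(i)})$ for $i$ even, consider
\[
\Phi:\mathbb{C}^{(2n-1)n}\to\mathbb{C}^{n\times n},\qquad
\Phi\big(b^{(1)},\dots,b^{(2n-1)}\big)=B_1B_2\cdots B_{2n-1}.
\]
The domain is an irreducible affine variety and $\Phi$ is a morphism, so its image is constructible; if $\Phi$ is dominant its image contains a nonempty Zariski-open set, which is automatically dense for the Euclidean topology, and the theorem follows. Hence it suffices to prove that the differential $d\Phi$ attains full rank $n^2$ at one point, equivalently at a generic point.

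A convenient preliminary observation is that the statement is symmetric under conjugation by the DFT matrix $F$: every circulant matrix equals $F^{-1}\Delta F$ with $\Delta$ diagonal and, conversely, $FDF^{-1}$ is circulant for every diagonal $D$, so $F\Phi(\cdot)F^{-1}$ is a product of the same length with the words ``circulant'' and ``diagonal'' interchanged; since conjugation by a fixed invertible matrix is a homeomorphism, it is enough to obtain density for either arrangement. Now fix a point at which every $B_i$ is invertible and compute $d\Phi$. By the product rule its image is $\sum_{i=1}^{2n-1} P_i\,\mathcal V_i\,Q_i$ with $P_i=B_1\cdots B_{i-1}$, $Q_i=B_{i+1}\cdots B_{2n-1}$, and $\mathcal V_i$ the space of circulant matrices ($i$ odd) or diagonal matrices ($i$ even). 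Using $\mathcal V_i=F^{-1}(\text{diagonals})F$ in the odd case, each of these $2n-1$ summands is the linear span of $n$ rank-one matrices $uv^{\top}$ whose left factor $u$ ranges over $\{P_ie_k\}_k$ or $\{P_iF^{-1}e_k\}_k$ and whose right factor $v$ ranges over the matching images of the standard or Fourier basis under $Q_i^{\top}$ or $(FQ_i)^{\top}$. Thus the image of $d\Phi$ is spanned by $(2n-1)n$ explicit rank-one matrices, and $(2n-1)n\ge n^2$ for all $n\ge1$, so the dimension count is in our favour.

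The crux, and the step I expect to be the real obstacle, is to exhibit one choice of the $2n-1$ factors for which these $(2n-1)n$ rank-one matrices actually span $\mathbb{C}^{n\times n}$. This is a nonempty Zariski-open condition on the parameters (it asks that a fixed $(2n-1)n\times n^2$ coefficient matrix, polynomial in the $b^{(i)}$, have rank $n^2$), so a single witness suffices; but the naive choice $B_i=I$ is \emph{not} a witness, since there $P_i=Q_i=I$ and the image of $d\Phi$ collapses to $\mathcal C+\mathcal D$, of dimension only $2n-1$ (the $n$-dimensional spaces of circulant and of diagonal matrices meet along $\mathbb{C}I$). One must therefore choose the circulant and diagonal factors so that the partial products $P_i,Q_i$ rotate the Fourier frame and the coordinate frame into sufficiently general position relative to one another --- precisely the phenomenon that alternating circulant with diagonal is designed to exploit, the two eigenbases being maximally unaligned --- and then certify the rank by an explicit computation, for instance by taking the diagonal and circulant entries to be suitable powers of a single transcendental parameter and checking that the resulting coefficient matrix is nonsingular. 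Once such a point is found, $\Phi$ is dominant and the density claim follows as in the first paragraph.

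An alternative, more constructive route, closer to the original argument of Huhtanen et al., is to reduce a generic $A$ in $n$ steps: peel one circulant factor and one diagonal factor off the left so that the remaining matrix lies in a proper subvariety of lower ``circulant--diagonal complexity'', and iterate until a single factor remains, so that altogether $n$ circulant and $n-1$ diagonal factors are used. This trades the one global Jacobian computation for $n$ local reductions, but the same genericity difficulty reappears at each step as the assertion that a single well-chosen circulant--diagonal pair removes one unit of complexity; controlling this remains the heart of the proof either way.
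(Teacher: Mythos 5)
Your overall strategy --- realise the admissible products as the image of a single polynomial map $\Phi$, invoke constructibility/Chevalley to conclude that a dominant $\Phi$ has Euclidean-dense image, and reduce everything to a rank computation for $d\Phi$ --- is a legitimate reduction, and your side observations are correct (conjugation by the DFT interchanges the roles of circulant and diagonal; the identity is not a witness because there the tangent space collapses to $\mathcal{C}+\mathcal{D}$, of dimension $2n-1$). But the argument stops exactly where the theorem begins. Exhibiting one parameter choice at which the $(2n-1)n$ rank-one matrices $\bigl(P_iu\bigr)\bigl(Q_i^{\top}v\bigr)^{\top}$ span $\mathbb{C}^{n\times n}$ \emph{is} the content of the statement: dominance of $\Phi$ is equivalent to the density you are trying to prove, and you explicitly defer it (``the step I expect to be the real obstacle''). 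The dimension count $(2n-1)n\ge n^{2}$ cannot stand in for it: the same count is already non-obstructive at $n$ alternating factors (domain dimension $n^{2}$), so counting parameters says nothing about why $2n-1$ factors suffice while fewer apparently do not. Your proposed certificate (``powers of a single transcendental parameter, then check the coefficient matrix is nonsingular'') is neither carried out nor obviously checkable for general $n$; as written, the proposal establishes only that the theorem would follow from a rank-$n^{2}$ witness that has not been produced.

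For calibration: the paper itself does not prove this statement --- it is imported from Huhtanen et al.\ \cite{Huhtanen2015}, and the version restated in the appendix carries the extra clause that is actually used downstream, namely that any $A=\sum_{i=1}^{k}D_{i}S^{i-1}$ (with $S$ the cyclic shift and $D_i$ diagonal) factors exactly as $B_{1}B_{2}\cdots B_{2k-1}$. The original argument follows your second, ``constructive'' sketch rather than the Jacobian route: every $A$ equals $\sum_{j=1}^{n}D_{j}S^{j-1}$ exactly (the $D_{j}$ read off the cyclic diagonals of $A$), and one then peels off a diagonal--circulant pair one ``degree'' at a time, much as one splits off linear factors of a polynomial, with a perturbation argument to handle the degenerate cases --- which is where the $\epsilon$ enters. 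If you want a complete proof you must either develop that induction in detail or actually produce the rank certificate; in both routes the missing step is the same, and it is the whole theorem.
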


Unfortunately, this theorem is of little use to understand the expressive power of diagonal-circulant matrices when they are used in deep neural networks. This is because: 1) the bound only depends on the dimension of the matrix $A$, not on the matrix itself, 2) the theorem does not provide any insights regarding the expressive power of $m$ diagonal-circulant factors when $m$ is much lower than $2n - 1$ as it is the case in most practical scenarios we consider in this paper. 

In the following theorem, we enhance the result by Huhtanen et al. \cite{Huhtanen2015} by expressing the number of factors required to approximate $A$, {\em as a function of the rank of $A$}. This is useful when one deals with low-rank matrices, which is common in machine learning problems. 

\begin{thm} \footnote{All proofs are in the arxiv version of the paper. \\ \url{https://arxiv.org/abs/1901.10255}} (Rank-based circulant decomposition)
\label{prop:rank-decomposition}Let $A\in\mathbb{C}^{n\times n}$ be
a matrix of rank at most $k$. Assume that $n$ can be divided by $k$. For
any $\epsilon>0$, there exists a sequence of $4k+1$ matrices $B_{1},\ldots,B_{4k+1},$ where $B_{i}$ is a circulant matrix if $i$ is odd, and a diagonal matrix otherwise, such that $\Vert B_1B_2\ldots B_{4k+1} - A\Vert < \epsilon$
\end{thm}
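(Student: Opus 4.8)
The plan is to reduce the rank-$k$ case to repeated application of Theorem~\ref{thm:huhtanen} on objects of size $k$, glued together by a block-circulant trick. First I would write $A = UV^{*}$ where $U, V \in \mathbb{C}^{n \times k}$, using the fact that $\mathrm{rank}(A) \le k$. The key structural observation is that an $n \times n$ matrix that is ``block-circulant'' with $k \times k$ blocks — i.e., whose $(n/k) \times (n/k)$ array of $k \times k$ blocks is circulant — can be realized, after a suitable permutation of coordinates, as a genuine circulant matrix acting on a reindexed space, or more precisely can be written as a short product of diagonal and circulant $n \times n$ matrices. So I would aim to express $U$ (resp.\ $V^{*}$) as a product of a constant number of diagonal and circulant $n\times n$ factors, by embedding the $k$ columns of $U$ into a block structure and invoking Theorem~\ref{thm:huhtanen} at dimension $k$ (which costs $2k-1$ alternating diagonal/circulant $k\times k$ factors), then lifting those $k\times k$ factors to $n\times n$ diagonal/block-circulant factors, and finally collapsing block-circulant factors into honest circulant ones.

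More concretely, the steps in order: (1) factor $A = UV^{*}$ with inner dimension $k$; (2) show that left-multiplication by $U$ can be simulated by: spreading a $k$-vector across $n$ coordinates via a circulant (or diagonal) ``replication'' map, applying a block-diagonal map whose blocks come from the Huhtanen decomposition of a $k\times k$ matrix built from $U$, and collapsing via another circulant; carefully counting, one such ``rectangular'' stage should cost about $2k$ alternating factors; (3) do the symmetric thing for $V^{*}$ for another $\approx 2k$ factors; (4) concatenate the two products — since one product ends in a circulant (or diagonal) and the other begins with the opposite type, or can be made to by inserting an identity which is both diagonal and circulant, the total telescopes to $4k+1$ alternating factors; (5) handle the $\epsilon$: each invocation of Theorem~\ref{thm:huhtanen} is itself only an $\epsilon'$-approximation, and since matrix multiplication is continuous and the factors can be taken with bounded norm, choosing $\epsilon'$ small enough in terms of $\epsilon$, $k$, and the norms of $U, V$ yields $\Vert B_1 \cdots B_{4k+1} - A \Vert < \epsilon$.

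The main obstacle is step~(2)/(4): making the bookkeeping of the alternating diagonal/circulant pattern come out to \emph{exactly} $4k+1$ rather than something like $4k + c$ for a larger constant, and in particular ensuring that when the ``$U$ part'' and the ``$V^{*}$ part'' are concatenated the parities line up so no extra filler factor is needed (this is where the divisibility hypothesis $k \mid n$ is used — it is what lets a $k\times k$ block pattern sit cleanly inside the $n$-dimensional circulant/diagonal algebra and lets a block-circulant matrix be rewritten as a product of a bounded number of circulants and diagonals without leftover terms). The continuity argument in step~(5) is routine but should be stated carefully, since $2n-1$ in Huhtanen's theorem is replaced here by a product whose length does not grow with $n$, so the norm bounds on the approximating factors need to be uniform; I would invoke the standard fact that the Huhtanen factors can be chosen with operator norm bounded in terms of $\Vert A\Vert$ and $n$, and absorb those constants into the choice of $\epsilon'$.
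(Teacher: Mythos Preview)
Your plan diverges from the paper's argument and, as written, has a real gap. The ``key structural observation'' you rely on --- that an $n\times n$ block-circulant matrix with general $k\times k$ blocks is permutation-similar to a genuine circulant, or at least equals a \emph{short} product of $n\times n$ diagonals and circulants --- is not correct. A block-circulant matrix with $n/k$ arbitrary $k\times k$ blocks carries $nk$ free parameters, whereas any product of $c$ alternating diagonal and circulant $n\times n$ matrices has at most $cn$ parameters; hence at least $k$ factors are needed to represent a single such block-circulant, and ``collapsing block-circulant factors into honest circulant ones'' cannot be done with $O(1)$ overhead. For the same reason, lifting a $k\times k$ circulant $C$ to $I_{n/k}\otimes C$ or $C\otimes I_{n/k}$ does not yield an $n\times n$ circulant, nor a bounded product of such. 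So steps~(2)--(4) as described do not produce an alternating product of length close to $4k+1$.

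The paper's proof avoids any block or lifting construction and works entirely in dimension $n$. From the SVD $A=U\Sigma V^{T}$ (with the last $n-k$ columns of $U$ and $V$ null), it writes $U = W R O$ where $O=\mathrm{diag}(1,\ldots,1,0,\ldots,0)$ is the projector onto the first $k$ coordinates, $R=\mathrm{circ}(r_1,\ldots,r_n)$ is a \emph{specific} sparse circulant with $r_j=1$ exactly when $j\in\{k,2k,\ldots,n\}$ (this is where $k\mid n$ is used), and $W=\sum_{i=1}^{k} D_i S^{i-1}$ for diagonal matrices $D_i$ found by solving a block-diagonal linear system whose blocks are permutations of the identity --- invertible precisely because of the chosen $R$. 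The decisive point is that $W$ has the special form $\sum_{i=1}^{k} D_i S^{i-1}$, so the \emph{second clause} of Theorem~\ref{thm:huhtanen} (not the generic $\epsilon$-approximation clause) factors $W$ exactly into $2k-1$ alternating circulant/diagonal $n\times n$ matrices --- no recursion to size $k$, no lifting. Doing the same for $V$ and assembling $A = W\,R\,(O\Sigma O^{T})\,R^{T}\,W'^{T}$ (with $O\Sigma O^{T}$ diagonal and $R,R^{T}$ circulant) gives the stated count. The idea your proposal is missing is exactly this: representing the orthogonal factor as $\sum_{i=1}^{k} D_i S^{i-1}$ and invoking the exact, structured half of Huhtanen's theorem at full size $n$, rather than trying to import a size-$k$ decomposition through a block structure.
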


A direct consequence of Theorem~\ref{prop:rank-decomposition}, is that if the number of diagonal-circulant factors is set to a value $K$, we can represent all linear transform $A$ whose rank is $\frac{K - 1}{4}$.

Compared to \cite{Huhtanen2015}, this result shows that structured matrices with fewer than $2n$ diagonal-circulant matrices (as it is the case in practice) can still represent a large class of matrices. As we will show in the following section, this result will be useful to analyze the expressivity of neural networks based on diagonal and circulant matrices.

 \section{Analysis of Diagonal Circulant Neural
Networks (DCNNs)}
\label{sec-dcnn-th}

Zhao et al. \cite{pmlr-v70-zhao17b} have shown that circulant networks with 2 layers and unbounded width are universal approximators. However, results on unbounded networks offer weak guarantees and two important questions have remained open until now: 1) {\em Can we approximate any function with a bounded-width circulant networks?} 2) {\em What function can we approximate with a circulant network that has a bounded width and a small depth?} 
We answer these two questions in this section.

First, we introduce some necessary definitions regarding neural networks and we provide a theoretical analysis of their approximation capabilities.

\begin{defn}[\drn]\label{drn}
Given $L$ weight matrices $W = (W_1, \ldots, W_L)$ with $W_i \in \mathbb C^{n\times n}$ and  $L$ bias vectors $b = (b_1, \ldots, b_L)$  with  $b_i \in \mathbb C^n$, a {\em deep \relu network} is a function $f_{W_L, b_L} : \mathbb C^n \rightarrow \mathbb C^n$ such that $f_{W, b}(x) =  (f_{W_L, b_L} \circ \ldots \circ f_{W_1, b_1})(x)$ where $f_{W_i, b_i}(x) = \phi(W_i x + b_i)$ and $\phi(.)$ is a \relu non-linearity
\footnote{Because our networks deal with complex numbers, we use an extension of the \relu function to the complex domain. The most straightforward extension defined in \cite{DBLP:conf/iclr/TrabelsiBZSSSMR18} is as follows: $\mathrm{\relu}(z)=\mathrm{ReLU}\left(\mathfrak{R}(z)\right)+i\mathrm{ReLU}\left(\mathfrak{I}(z)\right)$, where $\mathfrak{R}$ and $\mathfrak{I}$ refer to the real and imaginary parts of $z$.}
In the rest of this paper, we call $L$ and $n$ respectively the depth and the width of the network. Moreover, we call {\em total rank $k$}, the sum of the ranks of the matrices $W_{1}\ldots W_{L}$. i.e. $k = \sum_{i=1}^L rank(W_i)$.
\end{defn}

\noindent
We also need to introduce DCNNs, similarly to Moczulski et al.  \cite{moczulski2015acdc}.

\begin{defn}[Diagonal Circulant Neural Networks]\label{def:DCNN}
Given $L$ diagonal matrices $D = (D_1, \ldots, D_L)$ with $D_i \in \mathbb C^{n\times n}$, $L$ circulant matrices $C = (C_1, \ldots, C_L)$ with $C_i \in \mathbb C^{n\times n}$ and $L$ bias vectors $b = (b_1, \ldots, b_L)$ with  $b_i \in \mathbb C^n$, a {\em Diagonal Circulant Neural Networks} (DCNN) 
is a function  $f_{W_L, b_L} : \mathbb C^n \rightarrow \mathbb C^n$ such that $f_{D,C,b}(x) = (f_{D_L, C_L, b_L} \circ \ldots \circ f_{D_1, C_1, b_1})(x)$ where $f_{D_i, C_i, b_i}(x) = \phi_i(D_i C_i x + b_i)$ and where $\phi_i(.)$ is a \relu non-linearity or the identity function.
\end{defn}

\noindent
We can now show that bounded-width DCNNs can approximate any Deep ReLU Network, and as a corollary, that they are universal approximators.

\begin{lem}\label{mainth_}
Let $\mathcal{N}$ be a deep ReLU network of width $n$ and depth $L$, and let $\mathcal{X} \subset \mathbb{C}^{n}$ be a bounded set. For any $\epsilon>0$, there exists a DCNN $\mathcal{N}'$ of width $n$ and of depth $(2n-1)L$ such that $\Vert \mathcal{N}(x)-\mathcal{N}'(x) \Vert < \epsilon$ for all $x \in \mathcal{X}$.
\end{lem}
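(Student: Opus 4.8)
The plan is to replace each \relu layer of $\mathcal{N}$ by a block of $2n-1$ consecutive DCNN layers. Inside a block, the $2n-1$ layers together multiply the $2n-1$ alternating circulant/diagonal factors that Theorem~\ref{thm:huhtanen} produces for that \relu layer's weight matrix; only the last layer of the block also adds the layer's bias and applies the \relu activation, while all earlier layers of the block use the identity activation and zero bias. Stacking the $L$ blocks then gives a DCNN of width $n$ and depth $(2n-1)L$, and the remaining work is to show the accumulated approximation error is below $\epsilon$ on $\mathcal{X}$.

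Concretely, I would write $\mathcal{N}=g_L\circ\cdots\circ g_1$ with $g_i(x)=\phi(W_ix+b_i)$, fix a tolerance $\delta>0$ to be chosen at the end, and apply Theorem~\ref{thm:huhtanen} to each $W_i$ to obtain $\hat W_i:=B^{(i)}_1\cdots B^{(i)}_{2n-1}$ with $\|\hat W_i-W_i\|<\delta$, where $B^{(i)}_j$ is circulant for odd $j$ and diagonal for even $j$. The key elementary observation is that the identity matrix is simultaneously diagonal and circulant, so each single factor fits in one DCNN layer $x\mapsto\phi_j(D_jC_jx+b_j)$: put a circulant factor in $C_j$ with $D_j=I$, a diagonal factor in $D_j$ with $C_j=I$, take $b_j=0$ and $\phi_j=\mathrm{id}$ for every factor except the last of the block, and for that last factor (which is $B^{(i)}_1$, circulant) set $b_j=b_i$ and $\phi_j=\phi$. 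Composing these layers so that $B^{(i)}_{2n-1}$ acts first on the input and $B^{(i)}_1$ last realizes exactly $x\mapsto\phi(\hat W_ix+b_i)$, hence the concatenation $\mathcal{N}'$ of the $L$ blocks equals $g'_L\circ\cdots\circ g'_1$ with $g'_i(x)=\phi(\hat W_ix+b_i)$, a DCNN of width $n$ and depth $(2n-1)L$.

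For the error estimate I would use that $\phi$ is $1$-Lipschitz in the $\ell_2$ norm (because $\crelu$ is $1$-Lipschitz coordinatewise) and that affine maps are Lipschitz with constant equal to their operator norm. With $\|x\|\le R$ on $\mathcal{X}$ and $\delta\le1$, an easy induction bounds every intermediate activation $x'_i$ of $\mathcal{N}'$ by some radius $R'$ depending only on $R$, $L$, $\max_i\|W_i\|$ and $\max_i\|b_i\|$. Then, writing $x_i$, $x'_i$ for the intermediate states of $\mathcal{N}$, $\mathcal{N}'$ on a common input and using $\|x_i-x'_i\|\le\|W_ix_{i-1}-\hat W_ix'_{i-1}\|$,
\[
\|x_i-x'_i\| \le \|W_i\|\,\|x_{i-1}-x'_{i-1}\| + \delta\,\|x'_{i-1}\| \le M\,\|x_{i-1}-x'_{i-1}\| + \delta R',
\]
with $M:=\max(1,\max_i\|W_i\|)$ and $x_0=x'_0=x$, so $\|x_L-x'_L\|\le\delta R'(1+M+\cdots+M^{L-1})$. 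Choosing $\delta$ small enough in terms of $\epsilon,L,M,R'$ makes this $<\epsilon$ for all $x\in\mathcal{X}$, which proves the lemma; the universal-approximation corollary then follows since deep \relu networks are universal approximators.

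The construction is routine; the step that needs care is the error propagation, because the per-layer perturbation $\delta\|x'_{i-1}\|$ is multiplicative in the norm of the running activation. This is exactly why the statement restricts to a bounded set $\mathcal{X}$: one must first secure the a priori bound $R'$ on the activations of $\mathcal{N}'$ (which forces fixing $\delta\le1$ up front) and only afterwards make the final, smaller choice of $\delta$. That ordering of the choices is the only real subtlety.
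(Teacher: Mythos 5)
Your proof is correct, and while it shares the paper's skeleton --- apply Theorem~\ref{thm:huhtanen} to each $W_i$, place one circulant or diagonal factor per DCNN layer (padding with the identity matrix, which is both diagonal and circulant), and stack the $L$ blocks --- it diverges on the two points that carry the actual content. First, the treatment of intermediate activations: the paper keeps a genuine ReLU at every layer and neutralizes it with its auxiliary Lemma~\ref{lem:product_of_mat_to_DNN}, adding a large constant bias $\Omega\mathbf{1}_n$ at each intermediate layer so all activations stay in the region where ReLU acts as the identity, then cancelling the accumulated shift in the block's last layer. You instead observe that Definition~\ref{def:DCNN} explicitly permits $\phi_i$ to be the identity, which makes that lemma unnecessary. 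This is legitimate under the paper's definitions; the trade-off is that the paper's construction is slightly stronger (it works even if every layer must carry a ReLU), and the bias-shifting trick is what you would need in that case. Second, the error analysis: the paper argues only qualitatively that the error tends to zero by continuity of ReLU and boundedness of $\mathcal{X}$, whereas you run the explicit perturbation recursion $\Vert x_i - x'_i\Vert \le M\Vert x_{i-1}-x'_{i-1}\Vert + \delta R'$, sum the geometric series, and then choose $\delta$. Your version is the more rigorous of the two, and your remark that $\delta\le 1$ must be fixed first to secure the a priori activation bound $R'$ before the final choice of $\delta$ is precisely the uniformity issue the paper's continuity argument glosses over.
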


\noindent
We can now state the universal approximation corollary:

\begin{cor}\label{cor:universal}
Bounded width DCNNs are universal approximators in the following sense: for any continuous function $f:[0,1]^{n}\rightarrow\mathbb{R}_+$ of bounded supremum norm,
for any $\epsilon>0$, there exists a DCNN
$\mathcal{N}_{\epsilon}$ of width $n+3$ such that $\forall x\in[0,1]^{n+3}$, $\left|f(x_{1}\ldots x_{n})-\left(\mathcal{N}_{\epsilon}\left(x\right)\right)_{1}\right|<\epsilon$, where $\left(\cdot\right)_{i}$ represents the $i^{th}$ component of a vector.
\end{cor}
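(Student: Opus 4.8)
The plan is to chain two approximations: first replace the target $f$ by an ordinary deep \relu network of constant width $n+3$, then replace that network by a DCNN via Lemma~\ref{mainth_}. For the first step I would invoke the universal approximation theorem for narrow (bounded-width) deep \relu networks: for tolerance $\epsilon/2$ there is a deep \relu network $\mathcal{N}$ in the sense of Definition~\ref{drn}, of width $n+3$ (so that all weight matrices are $(n+3)\times(n+3)$) and of some finite depth $L$, with $\lvert f(x_1,\ldots,x_n)-(\mathcal{N}(x))_1\rvert<\epsilon/2$ for all $x\in[0,1]^{n+3}$. The three extra coordinates are the "scratch" dimensions a constant-width implementation needs: the first layer zeroes out coordinates $n+1,n+2,n+3$ (so the output is insensitive to them, as the statement demands, using that \relu is the identity on $[0,1]$), and the subsequent layers realize in constant width a standard wide approximator $\sum_j\alpha_j\,\phi(\langle w_j,x\rangle+\beta_j)$ of $f$, accumulating the hidden units one at a time. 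Non-negativity of $f$ is convenient here: each layer of a deep \relu network ends with a \relu, so the output coordinate is automatically $\geq 0$, matching the codomain $\mathbb{R}_+$, and the final value can be held in a single coordinate without a sign-splitting trick.

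For the second step I would apply Lemma~\ref{mainth_} to $\mathcal{N}$ with the bounded set $\mathcal{X}=[0,1]^{n+3}\subset\mathbb{C}^{n+3}$, obtaining a DCNN $\mathcal{N}_\epsilon$ of width $n+3$ and depth $(2(n+3)-1)L$ with $\Vert\mathcal{N}(x)-\mathcal{N}_\epsilon(x)\Vert<\epsilon/2$ on $\mathcal{X}$, hence $\lvert(\mathcal{N}(x))_1-(\mathcal{N}_\epsilon(x))_1\rvert<\epsilon/2$. Combining this with the first step through the triangle inequality yields $\lvert f(x_1,\ldots,x_n)-(\mathcal{N}_\epsilon(x))_1\rvert<\epsilon$ for all $x\in[0,1]^{n+3}$, which is the claim.

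The routine part is the second step, which is essentially a direct call to Lemma~\ref{mainth_}. The delicate points are all in the first step: (i) pinning the width down to exactly $n+3$ rather than $n+c$ for the possibly larger constant $c$ of an off-the-shelf narrow-network construction; and (ii) the real/complex bookkeeping -- since Definitions~\ref{drn}--\ref{def:DCNN} and Lemma~\ref{mainth_} are phrased over $\mathbb{C}$ with the complex activation $\crelu$, one must check that the approximator of step one can be taken with real weights and biases, so that on inputs in $[0,1]^{n+3}\subset\mathbb{R}^{n+3}$ all pre-activations stay real and $\crelu$ coincides with the ordinary \relu; only then does the complex statement of Lemma~\ref{mainth_} splice cleanly onto the classical real-valued universal approximation theorem.
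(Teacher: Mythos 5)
Your proposal is correct and follows essentially the same route as the paper: the paper likewise first invokes the bounded-width universal approximation theorem for deep \relu networks (citing Hanin's result that hidden width $n+3$ suffices for continuous non-negative functions on $[0,1]^{n}$), pads the network to constant width exactly $n+3$, and then applies Lemma~\ref{mainth_} on the bounded set $[0,1]^{n+3}$. Your extra care about the real-versus-complex activation and about pinning the width to exactly $n+3$ are refinements of, not departures from, that argument.
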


\noindent
This is a first result, however $(2n+5)L$ is not a small depth (in our experiments, $n$ can be over 300~000), and a number of work provided empirical evidences that DCNN with small depth can offer good performances (e.g. \cite{anca2018eccv,cheng}). To improve our result, we introduce our main theorem which studies the approximation properties of these small depth networks.

\begin{thm}(Rank-based expressive power of DCNNs)
\label{prop:low_rank_nn}
\noindent Let $\mathcal{N}$ be a deep ReLU network of width $n$, depth $L$ and a total rank $k$ and assume $n$ is a power of $2$. Let $\mathcal{X} \subset \mathbb{C}^{n}$ be a bounded set. Then, for any $\epsilon>0$, there exists a DCNN with ReLU activation $\mathcal{N}'$ of width $n$ such that $\left\Vert \mathcal{N}(x)-\mathcal{N}'(x)\right\Vert <\epsilon$ for all $x\in\mathcal{X}$ and the depth of $\mathcal{N}'$ is bounded by $9k$.\end{thm}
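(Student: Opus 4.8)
The plan is to construct $\mathcal N'$ block by block, replacing the $i$-th ReLU layer $x\mapsto\phi(W_ix+b_i)$ of $\mathcal N$ by a short stack of diagonal--circulant layers produced by Theorem~\ref{prop:rank-decomposition}. Write $r_i=\operatorname{rank}(W_i)$, so that $\sum_{i=1}^L r_i=k$. The difficulty is that Theorem~\ref{prop:rank-decomposition} needs the rank bound to divide $n$, whereas $r_i$ need not; this is exactly where the hypothesis ``$n$ is a power of $2$'' is used. I take $\widetilde r_i$ to be the smallest power of $2$ with $\widetilde r_i\ge r_i$; then $\widetilde r_i\le 2r_i$, and since $n$ is a power of $2$ with $r_i\le n$ we have $\widetilde r_i\mid n$. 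As $W_i$ has rank at most $\widetilde r_i$, Theorem~\ref{prop:rank-decomposition} gives, for any $\delta_i>0$, an alternating product $M_i:=B^{(i)}_1\cdots B^{(i)}_{4\widetilde r_i+1}$ (circulant for odd indices, diagonal for even indices) with $\Vert M_i-W_i\Vert<\delta_i$.

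Next I realise each $M_i$ as $4\widetilde r_i+1$ diagonal--circulant layers: a circulant factor $B$ becomes the layer $x\mapsto (I\cdot B)x$ and a diagonal factor $B$ becomes $x\mapsto(B\cdot I)x$ (both are valid $D_jC_j$ layers, since $I$ is simultaneously diagonal and circulant), ordered so that their composition is $x\mapsto M_ix$. I give all of these layers zero bias and the identity activation, except the last one, which I equip with bias $b_i$ and a ReLU activation; this block therefore computes $x\mapsto\phi(M_ix+b_i)$, an approximation of the true layer $x\mapsto\phi(W_ix+b_i)$. Concatenating the $L$ blocks yields a DCNN $\mathcal N'$ of width $n$ and depth $\sum_{i=1}^L(4\widetilde r_i+1)\le\sum_{i=1}^L(8r_i+1)=8k+L$. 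We may assume every $W_i$ has rank at least $1$ --- if some $W_i=0$ then $\mathcal N$ does not depend on $x$ and a single diagonal--circulant layer represents it exactly --- so $L\le\sum_i r_i=k$ and the depth is at most $9k$, as claimed.

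Finally, one must pick the $\delta_i$ small enough that $\Vert\mathcal N(x)-\mathcal N'(x)\Vert<\epsilon$ on $\mathcal X$; this is the perturbation argument already used for Lemma~\ref{mainth_}. Since $\phi$ and the identity are $1$-Lipschitz, replacing $W_i$ by $M_i$ in a single layer changes its output by at most $\delta_i\Vert x\Vert$; because $\mathcal X$ is bounded and there are finitely many layers with bounded weight matrices, all intermediate activations of $\mathcal N$ (and of $\mathcal N'$, once the $\delta_i$ are small) remain in a fixed bounded set, and a telescoping estimate bounds $\Vert\mathcal N(x)-\mathcal N'(x)\Vert$ by $\sum_i\big(\prod_{j>i}\Vert W_j\Vert\big)\delta_i\cdot R$ for a constant $R$ depending only on $\mathcal X$ and the biases. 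Choosing each $\delta_i$ small enough makes this less than $\epsilon$. The only genuinely delicate step is the first paragraph: matching the per-layer ranks $r_i$ to the divisibility requirement of Theorem~\ref{prop:rank-decomposition} at the cost of only a factor $2$, and then using $L\le k$ to collapse $8k+L$ into $9k$; everything after that is routine bookkeeping.
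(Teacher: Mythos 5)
Your proposal is correct and follows essentially the same route as the paper's proof: round each per-layer rank $r_i$ up to a power of two $\widetilde r_i\le 2r_i$ dividing $n$, apply Theorem~\ref{prop:rank-decomposition} to each $W_i$, count $\sum_i(4\widetilde r_i+1)\le 8k+L\le 9k$ layers, and finish with a continuity/perturbation argument (you even handle the implicit $L\le k$ step, which the paper leaves tacit). The one cosmetic difference is that you make the intermediate layers of each block use the identity activation (permitted by Definition~\ref{def:DCNN}), whereas the paper, to match the phrase ``DCNN with ReLU activation,'' reuses the bias-shifting trick of Lemma~\ref{lem:product_of_mat_to_DNN} so that every layer carries a genuine ReLU that acts as the identity on the bounded set $\mathcal{X}$; if you want all activations to be ReLUs you would need that same trick.
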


\noindent
Remark that in the theorem, we require that $n$ is a power of $2$. We conjecture that the result still holds even without this condition. This result refines Lemma~\ref{mainth_}, and answer our second question: a DCNN of bounded width and small depth can approximate a Deep ReLU network of low  total rank. Note that the converse is not true: because $n$-by-$n$ circulant matrix can be of rank $n$, approximating a DCNN of depth $1$ can require a deep ReLU network of total rank equals to $n$.

\paragraph{Expressivity of DCNNs}

For the sake of clarity, we highlight the significance of these results with the two following properties.

\textbf{Properties. }
Given an arbitrary fixed integer $n$, let $\text{\ensuremath{\mathcal{R}}}_{k}$ be the set of all
functions $f:\mathbb{R}^{n}\rightarrow\mathbb{R}^{n}$ representable
by a deep ReLU network of total rank at most $k$ and  let $\mathcal{C}_{l}$ the set of all functions $f:\mathbb{R}^{n}\rightarrow\mathbb{R}^{n}$
representable by deep diagonal-circulant networks of depth at most
$l$, then:
\begin{align}
\label{prop_eq1}\forall k,\exists l \, & \quad \mathcal{R}_{k}\subsetneq\mathcal{C}_{l} \\
\label{prop_eq2}\forall l,\nexists k\, & \quad \mathcal{C}_{l}\subseteq\mathcal{R}_{k}
\end{align}

\noindent
We illustrate the meaning of this properties 
using Figure~\ref{fig:circfig}. As we can see, the set $\mathcal R_{k}$ of all the functions representable by a deep \relu network of total rank $k$ is strictly included in the set $\mathcal C_{9k}$ of all DCNN of depth $9k$ (as by Theorem~\ref{prop:low_rank_nn}).

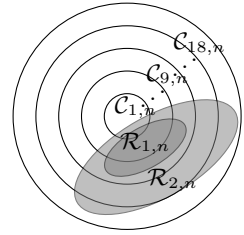
\begin{figure}[th]
    \begin{center}
    \tikzset{%
  >={Latex[width=2mm,length=2mm]},
            base/.style = {rectangle, draw=black, text centered, font=\sffamily},
           other/.style = {base, fill=none,  minimum width=1.7cm, minimum height=0.7cm},
         ellipse/.style = {base}
}
\begin{tikzpicture}[every node/.style={fill=white, font=\sffamily}, align=center,scale=0.6]

    \draw (0,0) circle (2.5cm);
    \draw (0,0) circle (2.0cm);
    \draw (0,0) circle (1.5cm);
    \draw (0,0) circle (1.0cm);
    \draw (0,0) circle (0.5cm);
    
    \draw[ellipse, rotate=30, fill=gray, opacity=0.5] (0.1, -1.1) ellipse (2.0cm and 0.9cm);
    \draw[ellipse, rotate=30, fill=gray, opacity=0.5] (0.0, -0.8) ellipse (1.0cm and 0.45cm);

    \node[other, draw=none] at (0.20, 0.20) {$\mathcal{C}_{1,n}$};
    \node[other, draw=none] at (0.55, 0.55) {$\iddots$};
    \node[other, draw=none] at (0.90, 0.90) {$\mathcal{C}_{9,n}$};
    \node[other, draw=none] at (1.25, 1.25) {$\iddots$};
    \node[other, draw=none] at (1.60, 1.60) {$\mathcal{C}_{18,n}$};
    
    \node[other, draw=none] at (0.4, -0.6) {$\mathcal{R}_{1,n}$};
    \node[other, draw=none] at (1.0, -1.4) {$\mathcal{R}_{2,n}$};

\end{tikzpicture}
    \end{center}
    \caption{Illustration of Properties (1) and (2).
    }
    \label{fig:circfig}
\end{figure}

\noindent
These properties are interesting for many reasons. 
First, Property (\ref{prop_eq2}) shows that diagonal-circulant networks are \emph{strictly more expressive} than networks with low total rank. 
Second and most importantly, in standard deep neural networks, it is known that the most of the singular values are close to zero (see e.g. \cite{Sedghi18iclr,Arora19neurips}). Property (\ref{prop_eq1}) shows that these networks can efficiently be approximated by  diagonal-circulant networks. Finally, several publications have shown that neural networks can be trained explicitly to have low-rank weight matrices \cite{chong18eccv, goyal19}. This opens the possibility of learning compact and accurate diagonal-circulant networks.

\section{How to train very deep DCNNs}
\label{section:training}

\begin{figure*}[ht]
\centering
\subfigure[]{
    \centering
    \begin{tikzpicture}[scale=0.5]
\begin{axis}[
    legend cell align={left},
    xlabel={\large \#layers},
    ylabel={Test Accuracy},
    xmin=2, xmax=40,
    legend pos=south west,
    ymajorgrids=true,
    grid style=dashed,
	]
  \addplot[mark=square, color=blue, line width=0.4mm] table [y=accuracy, x=layers]{data/cifar10/factor/1_factor.dat};
  \addplot[mark=triangle, color=brown, line width=0.4mm] table [y=accuracy, x=layers]{data/cifar10/factor/2_factor.dat};
  \addplot[mark=o, color=gray, line width=0.4mm] table [y=accuracy, x=layers]{data/cifar10/factor/3_factor.dat};
    \legend{
      ReLU(DC),
      ReLU(DCDC), 
      ReLU(DCDCDC), 
     }
\end{axis}
\end{tikzpicture}
    \label{fig:cifar10_factor}
    }\hspace{2cm}
\subfigure[]{
    \centering
    \begin{tikzpicture}[scale=0.5]
\begin{axis}[
    legend cell align={left},
    xlabel={\large \#layers},
    ylabel={Test Accuracy},
    xmin=2, xmax=40,
    legend pos=south west,
    ymajorgrids=true,
    grid style=dashed,
	]
  \addplot[mark=square, color=blue, line width=0.4mm] table [y=accuracy, x=layers]{data/cifar10/leaky_relu/slope_0.2.dat};
  \addplot[mark=triangle, color=brown, line width=0.4mm] table [y=accuracy, x=layers]{data/cifar10/leaky_relu/slope_0.3.dat};
  \addplot[mark=o, color=gray, line width=0.4mm] table [y=accuracy, x=layers]{data/cifar10/leaky_relu/slope_0.5.dat};
    \legend{
      Leaky ReLU 0.2,
      Leaky ReLU 0.3,
      Leaky ReLU 0.5,
     }
\end{axis}
\end{tikzpicture}
    \label{fig:cifar10_leaky_relu}
    }
    \caption{Experiments on training DCNNs and other structured neural networks on CIFAR-10. Figure~\ref{fig:cifar10_factor}: impact of increasing the number of ReLU activations in a DCNN. Deep DCNNs with fewer ReLUs are easier to train.
    Figure~\ref{fig:cifar10_leaky_relu}: impact of increasing the slope of a Leaky-ReLU in DCNNs. Deep DCNNs with a larger slope are easier to train.}
\end{figure*}
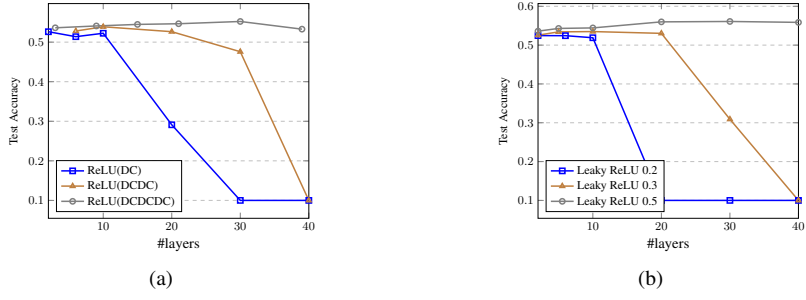

Training DCNNs has revealed to be a challenging problem. We devise two techniques to facilitate the training of deep DCNNs. First, we propose an initialization procedure which guarantee the signal is propagated across the network without vanishing nor exploding. Secondly, we study the behavior of DCNNs with different non-linearity functions and determine the best parameters for different settings. 

\paragraph{Initialization scheme} The following initialization procedure which is a variant of Xavier initialization. First, for each circulant matrix $C=circ(c_{1}\ldots c_{n})$, each $c_{i}$ is randomly drawn from $\mathcal{N}\left(0,\sigma^{2}\right)$, with $\sigma=\sqrt{\frac{2}{n}}$. Next, for each diagonal matrix $D=diag(d_{1}\ldots d_{n})$, each $d_{i}$ is drawn randomly and uniformly from $\{-1,1\}$ for all $i$. Finally, all biases in the network are randomly drawn from $\mathcal{N}\left(0,\sigma'^{2}\right)$, for some small value of $\sigma'$.
The following proposition states that the covariance matrix at the output of any layer in a DCNN, independent of the depth, is constant.

\begin{prop}
\label{prop:initialization}Let $\mathcal{N}$ be a DCNN of depth $L$ initialized according to our procedure, with $\sigma'=0$. Assume that all layers $1$ to $L-1$ have ReLU activation functions, and that the last layer has the identity activation function. Then, for any $x\in\mathbb{R}^{n}$, the covariance matrix of $\mathcal{N}(x)$ is $\frac{2.Id}{n}\left\Vert x\right\Vert _{2}^{2}$.
Moreover, note that this covariance does not depend on the depth of
the network.
\end{prop}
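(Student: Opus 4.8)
The plan is to peel the network apart one diagonal-circulant block at a time, tracking not the full distribution of the hidden activations (which is not Gaussian after the first ReLU) but only their first two moments together with a coordinatewise symmetry property. Write $h_0 = x$ and $h_\ell = \phi_\ell(D_\ell C_\ell h_{\ell-1})$ for $\ell = 1,\dots,L$, where $\phi_\ell = \mathrm{ReLU}$ for $\ell < L$ and $\phi_L = \mathrm{id}$, so $\mathcal{N}(x) = h_L$. Since $x$ is real and the initialization produces only real parameters, every $h_\ell$ is real and the ordinary ReLU applies. All randomness comes from the independent blocks $(D_\ell, C_\ell)$, and crucially, by construction, $(D_\ell, C_\ell)$ is independent of $h_{\ell-1}$ (which is a function of $x$ and the earlier blocks only).

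The heart of the argument is a one-layer lemma: let $v$ be any real random vector independent of a freshly initialized pair $(D,C)$, with $C = \mathrm{circ}(c_1,\dots,c_n)$, the $c_i$ i.i.d.\ $\mathcal{N}(0,\sigma^2)$, $\sigma^2 = 2/n$, and $D = \mathrm{diag}(d_1,\dots,d_n)$ with the $d_i$ i.i.d.\ uniform on $\{-1,1\}$, all mutually independent. Then (i) $\mathbb{E}[(DCv)_i] = \mathbb{E}[d_i]\,\mathbb{E}[(Cv)_i] = 0$; (ii) $\mathrm{Cov}\big((DCv)_i,(DCv)_j\big) = \mathbb{E}[d_i d_j]\,\mathbb{E}[(Cv)_i (Cv)_j]$, which vanishes for $i \ne j$ since $\mathbb{E}[d_i d_j] = 0$, and for $i = j$ equals $\mathbb{E}[(Cv)_i^2]$; and (iii) each coordinate $(DCv)_i = d_i (Cv)_i$ is symmetric about $0$ because $d_i$ is an independent symmetric sign. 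For the diagonal term, writing $(Cv)_i = \sum_m c_m v_{(i-m)\bmod n}$ and using independence of the $c_m$ from $v$ together with $\mathbb{E}[c_m c_{m'}] = \sigma^2 \delta_{mm'}$ gives $\mathbb{E}[(Cv)_i^2] = \sigma^2 \sum_m \mathbb{E}\big[v_{(i-m)\bmod n}^2\big] = \sigma^2\,\mathbb{E}[\|v\|^2]$, the reindexing $j = (i-m)\bmod n$ being a bijection. Hence $\mathrm{Cov}(DCv) = \tfrac{2}{n}\,\mathbb{E}[\|v\|^2]\,\mathrm{Id}$; note that $\mathrm{Cov}(Cv)$ alone is a circulant autocorrelation matrix, not a scalar matrix, and it is precisely the random signs $D$ that wash it out into a multiple of the identity.

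Next I would propagate the squared norm. Apply the lemma with $v = h_{\ell-1}$ and set $z_\ell := D_\ell C_\ell h_{\ell-1}$. For $\ell < L$, each $z_{\ell,i}$ is symmetric about $0$ and has no atom at $0$ on the event $\{h_{\ell-1}\neq 0\}$ (while contributing $0$ on $\{h_{\ell-1}=0\}$), so $\mathbb{E}[\mathrm{ReLU}(z_{\ell,i})^2] = \tfrac12\,\mathbb{E}[z_{\ell,i}^2]$. Summing, $\mathbb{E}[\|h_\ell\|^2] = \tfrac12 \sum_i \mathrm{Var}(z_{\ell,i}) = \tfrac12\, n\, \sigma^2\, \mathbb{E}[\|h_{\ell-1}\|^2] = \mathbb{E}[\|h_{\ell-1}\|^2]$. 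Iterating from $h_0 = x$ gives $\mathbb{E}[\|h_\ell\|^2] = \|x\|_2^2$ for all $\ell \le L-1$. Finally, the last layer has the identity activation, so $\mathcal{N}(x) = D_L C_L h_{L-1}$, and the one-layer lemma yields $\mathrm{Cov}(\mathcal{N}(x)) = \tfrac{2}{n}\,\mathbb{E}[\|h_{L-1}\|^2]\,\mathrm{Id} = \tfrac{2\,\|x\|_2^2}{n}\,\mathrm{Id}$, with no dependence on $L$.

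I expect the main obstacle to be purely organizational: keeping the independence relations straight (the fresh block is independent of all previously computed quantities, which is what makes both the decoupling $\mathbb{E}[d_id_j(Cv)_i(Cv)_j] = \mathbb{E}[d_id_j]\mathbb{E}[(Cv)_i(Cv)_j]$ and the symmetrization legitimate), and resisting the temptation to invoke Gaussianity beyond the first layer—the ReLU destroys it, so everything must go through second moments and coordinatewise symmetry. The only genuinely technical point is the no-atom-at-$0$ claim needed for $\mathbb{E}[\mathrm{ReLU}(Y)^2] = \tfrac12\mathbb{E}[Y^2]$, which is dispatched by conditioning on whether $h_{\ell-1}$ vanishes.
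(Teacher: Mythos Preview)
Your proposal is correct and follows essentially the same route as the paper: a one-layer lemma (the paper's Lemma~\ref{lem:covariance}) establishing zero off-diagonal covariances, per-coordinate variance $\sigma^2\,\mathbb{E}[\|v\|^2]$, and coordinatewise symmetry for $DCv$, followed by a recursive application using $\mathbb{E}[\mathrm{ReLU}(Y)^2]=\tfrac12\,\mathbb{E}[Y^2]$ for symmetric $Y$. Your write-up is in fact a bit more careful than the paper about the independence of the fresh block $(D_\ell,C_\ell)$ from $h_{\ell-1}$; the ``no atom at $0$'' worry you flag is harmless but unnecessary, since $Y^2\mathbf{1}_{\{Y=0\}}=0$ regardless.
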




\begin{proof} \emph{(Proposition \ref{prop:initialization})}
Let $\mathcal{N}=\ensuremath{f_{D_{L},C_{L}}\circ\ldots\circ f_{D_{1},C_{1}}}$
be a $L$ layer DCNN. All matrices are initialized as described
in the statement of the proposition. Let $y=D_{1}C_{1}x$. Lemma \ref{lem:covariance}
shows that $cov(y_{i},y_{i'})=0$ for $i\neq i'$ and $var(y_{i})=\frac{2}{n}\left\Vert x\right\Vert _{2}^{2}$.
For any $j\le L$, define $z^{j}=f_{D_{j},C_{j}}\circ\ldots\circ f_{D_{1},C_{1}}(x)$.
By a recursive application of lemma \ref{lem:covariance}, we get
that then $cov(z_{i}^{j},z_{i'}^{j})=0$ and $var(z_{i}^{j})=\frac{2}{n}\left\Vert x\right\Vert _{2}^{2}$.
\end{proof}
\begin{lem}
\label{lem:covariance}Let $c_{1}\ldots c_{n},d_{1}\ldots d_{n},b_{1}\ldots b_{n}$
be random variables in $\mathbb{R}$ such that $c_{i}\sim\mathcal{N}(0,\sigma^{2})$, $b_{i}\sim\mathcal{N}(0,\sigma'^{2})$
and $d_{i}\sim\{-1,1\}$ uniformly. Define $C=circ(c_{1}\ldots c_{n})$
and $D=diag(d_{1}\ldots d_{n})$. Define $y=DCu$ and $z=CDu$ for
some vector $u$ in $\mathbb{R}^{n}$. Also define $\bar{y}=y+b$ and $\bar{z}=z+b$. Then, for all $i$, the p.d.f.
of $y_{i}$, $\bar{y}_{i}$, $z_{i}$ and $\bar{z}_{i}$ are symmetric. Also:
\begin{itemize}
\item Assume $u_{1}\ldots u_{n}$ is fixed. Then, we have for $i\neq i':$
\begin{align*}
cov(y_{i},y_{i'}) & =cov(z_{i},z_{i'}) =cov(\bar{y}_{i},\bar{y}_{i'}) =cov(\bar{z}_{i},\bar{z}_{i'})=0\\
var(y_{i}) & =var(z_{i})=\sum_{j}u_{j}^{2}\sigma^{2} \\
var(\bar{y}_{i}) & =var(\bar{z}_{i})=\sigma'^2+\sum_{j}u_{j}^{2}\sigma^{2}
\end{align*}
\item Let $x_{1}\ldots x_{n}$ be random variables in $\mathbb{R}$ such
that the p.d.f. of $x_{i}$ is symmetric for all $i$, and let $u_{i}=ReLU(x_{i})$.
We have for $i\neq i':$
\begin{align*}
cov(y_{i},y_{i'}) & =cov(z_{i},z_{i'})  =cov(\bar{y}_{i},\bar{y}_{i'})  =cov(\bar{z}_{i},\bar{z}_{i'})=0\\
var(y_{i}) & =var(z_{i})=\frac{1}{2}\sum_{j}var(x_{i}).\sigma^{2} \\
var(\bar{y}_{i}) & =var(\bar{z}_{i})=\sigma'^2+\frac{1}{2}\sum_{j}var(x_{i}).\sigma^{2}
\end{align*}
\end{itemize}
\end{lem}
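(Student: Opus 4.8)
The plan is to reduce everything to two elementary ingredients — a combinatorial fact about the circulant pattern and the independence/symmetry of $c$, $d$, $b$ — and then obtain all eight sub-claims (the variables $y_i,\bar y_i,z_i,\bar z_i$ in each of the two regimes) by two cheap reductions. First I would record the circulant bookkeeping. Writing $C_{ij}=c_{(i-j)\bmod n}$, for each fixed row $i$ the map $j\mapsto(i-j)\bmod n$ is a bijection of $\{0,\dots,n-1\}$, so each $c_k$ occurs exactly once in row $i$; and for $i\neq i'$ and every $j$ one has $(i-j)\bmod n\neq(i'-j)\bmod n$. Since the $c_k$ are i.i.d., centered, with variance $\sigma^2$, this yields $E\!\left[c_{(i-j)\bmod n}c_{(i-j')\bmod n}\right]=\sigma^2$ if $j=j'$ and $0$ otherwise (within a row), and $E\!\left[c_{(i-j)\bmod n}c_{(i'-j')\bmod n}\right]=0$ whenever $j=j'$ and $i\neq i'$ (across rows). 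I also note the facts about $d$: $E[d_i]=0$, $d_i^2=1$, $E[d_id_{i'}]=0$ for $i\neq i'$, and that $c$, $d$, $b$ are mutually independent (and $b_i$ has variance $\sigma'^2$ with independent coordinates).

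Next I would prove the case where $u$ is deterministic. For symmetry: $(Cu)_i$ is a linear combination of independent symmetric (Gaussian) variables, hence symmetric, so $y_i=d_i(Cu)_i$ is symmetric; $z_i=\sum_j c_{(i-j)\bmod n}d_ju_j$ is, conditionally on $d$, a centered Gaussian, hence symmetric, and a mixture of symmetric laws is symmetric; adding the independent symmetric $b_i$ preserves symmetry, giving $\bar y_i,\bar z_i$ symmetric. All four variables are centered, so variances and covariances equal second moments; expanding $y_i^2,\,z_i^2,\,y_iy_{i'},\,z_iz_{i'}$ and applying $d_i^2=1$, $E[d_id_{i'}]=0$ for $i\neq i'$, independence of $c$ and $d$, and the two row identities above gives $\mathrm{var}(y_i)=\mathrm{var}(z_i)=\sigma^2\sum_j u_j^2$ and $\mathrm{cov}(y_i,y_{i'})=\mathrm{cov}(z_i,z_{i'})=0$. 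The barred statements then follow from $\bar y=y+b$ with $b$ independent of $y$: each variance picks up $\sigma'^2$ and each covariance picks up $\mathrm{cov}(b_i,b_{i'})=0$.

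For the ReLU regime I would condition on $x_1,\dots,x_n$ (equivalently on $u$), using that $x$ is independent of $c,d,b$. Conditionally, $u$ is a fixed vector and the previous paragraph applies verbatim, so $E[\,y_i^2\mid u]=E[\,z_i^2\mid u]=\sigma^2\sum_j u_j^2$ and $E[\,y_iy_{i'}\mid u]=E[\,z_iz_{i'}\mid u]=0$ for $i\neq i'$. Taking expectations (tower rule) turns $\sum_j u_j^2$ into $\sigma^2\sum_j E\!\left[\mathrm{ReLU}(x_j)^2\right]$; and for a variable with symmetric density, $E\!\left[\mathrm{ReLU}(x_j)^2\right]=\tfrac12 E[x_j^2]=\tfrac12\,\mathrm{var}(x_j)$, which produces the stated $\tfrac12\sum_j \mathrm{var}(x_j)\,\sigma^2$ and, with the bias, its $+\sigma'^2$ variant; the covariances average $0$ to $0$. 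Symmetry of $y_i,\bar y_i,z_i,\bar z_i$ in this regime holds exactly as before, since $d_i$ (resp.\ $b_i$, resp.\ the Gaussian weights $c$) is still an independent symmetric factor and conditioning on $u$ does not change that.

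I expect the only real friction to be organizational rather than mathematical: getting the circulant index arithmetic exactly right and keeping the case analysis tidy. The two reductions — ``barred $=$ unbarred plus independent bias'' and ``ReLU regime $=$ fixed-$u$ regime conditioned and averaged, together with $E[\mathrm{ReLU}(x)^2]=\tfrac12\,\mathrm{var}(x)$'' — are what keep the proof short; no step goes beyond second-moment computations, so there is no genuine technical obstacle.
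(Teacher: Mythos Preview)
Your proposal is correct and follows essentially the same route as the paper: both arguments expand $y_i,z_i$ as sums over the circulant indices, use $\mathbb{E}[d_jd_{j'}]=\delta_{jj'}$ together with independence of $c,d,b$ to collapse the double sum to a diagonal one, and then invoke the symmetric-density identity $\mathbb{E}[\mathrm{ReLU}(x)^2]=\tfrac12\,\mathrm{var}(x)$ for the second regime. Your explicit packaging via the two reductions (``barred $=$ unbarred $+$ independent bias'' and ``ReLU regime $=$ condition on $u$, apply fixed case, average'') is a tidy way to organize exactly what the paper does inline; no substantive difference.
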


\begin{proof} \emph{(Lemma \ref{lem:covariance})}
By an abuse of notation, we write $c_{0}=c_{n},c_{-1}=c_{n-1}$ and
so on. First, note that: $y_{i}=\sum_{j=1}^{n}c_{j-i}u_{j}d_{j}$
and $z_{i}=\sum_{j=1}^{n}c_{j-i}u_{j}d_{i}$. Observe that each term
$c_{j-i}u_{j}d_{j}$ and $c_{j-i}u_{j}d_{i}$ have symmetric p.d.f.
because of $d_{i}$ and $d_{j}$. Thus, $y_{i}$ and $z_{i}$ have
symmetric p.d.f. Now let us compute the covariance.

\begin{align*}
    cov(y_{i},y_{i'}) &= \sum_{j,j'=1}^{n}cov\left(c_{j-i}u_{j}d_{j},c_{j'-i'}u_{j'}d_{j'}\right) \\
    \begin{split}
        &= \sum_{j,j'=1}^{n}\mathbb{E}\left[c_{j-i}u_{j}d_{j}c_{j'-i'}u_{j'}d_{j'}\right] \\ &\quad-\mathbb{E}\left[c_{j-i}u_{j}d_{j}\right]\mathbb{E}\left[c_{j'-i'}u_{j'}d_{j'}\right]
    \end{split}
\end{align*}

Observe that $\mathbb{E}\left[c_{j-i}u_{j}d_{j}\right]=\mathbb{E}\left[c_{j-i}u_{j}\right]\mathbb{E}\left[d_{j}\right]=0$
because $d_{j}$ is independent from $c_{j-i}u_{j}$. Also, observe
that if $j\neq j'$ then $\mathbb{E}\left[d_{j}d_{j'}\right]=0$ and
thus $\mathbb{E}\left[c_{j-i}u_{j}d_{j}c_{j'-i'}u_{j'}d_{j'}\right]=\mathbb{E}\left[d_{j}d_{j'}\right]\mathbb{E}\left[c_{j-i}u_{j}c_{j'-i'}u_{j'}\right]=0$.
Thus, the only non null terms are those for which $j=j'$. We get:
\begin{align*}
cov(y_{i},y_{i'}) & =\sum_{j=1}^{n}\mathbb{E}\left[c_{j-i}u_{j}d_{j}c_{j-i'}u_{j}d_{j}\right]\\
 & =\sum_{j=1}^{n}\mathbb{E}\left[c_{j-i}c_{j-i'}u_{j}^{2}\right]
\end{align*}
Assume $u$ is a fixed vector. Then, $var(y_{i})=\sum_{j=1}^{n}u_{j}^{2}\sigma^{2}$
and $cov(y_{i},y_{i'})=0$ for $i\neq i'$ because $c_{j-i}$ is independent
from $c_{j-i'}$.
Now assume that $u_{j}=ReLU(x_{j})$ where $x_{j}$ is a r.v. Clearly,
$u_{j}^{2}$ is independent from $c_{j-i}$ and $c_{j-i'}$. Thus:
\begin{align*}
cov(y_{i},y_{i'}) & =\sum_{j=1}^{n}\mathbb{E}\left[c_{j-i}c_{j-i'}\right]\mathbb{E}\left[u_{j}^{2}\right]
\end{align*}
For $i\neq i'$, then $c_{j-i}$ and $c_{j-i'}$ are independent,
and thus $\mathbb{E}\left[c_{j-i}c_{j-i'}\right]=\mathbb{E}\left[c_{j-i}\right]\mathbb{E}\left[c_{j-i'}\right]=0$.
Therefore, $cov(y_{i},y_{i'})=0$ if $i\neq i'$. Let us compute the
variance. We get $var(y_{i})=\sum_{j=1}^{n}var(c_{j-i}).\mathbb{E}\left[u_{j}^{2}\right]$.
Because the p.d.f. of $x_{j}$ is symmetric, $\mathbb{E}\left[x_{j}^{2}\right]=2\mathbb{E}\left[u_{j}^{2}\right]$
and $\mathbb{E}\left[x_{j}\right]=0$. Thus, $var(y_{i})=\frac{1}{2}\sum_{j=1}^{n}var(c_{j-i}).\mathbb{E}\left[x_{j}^{2}\right]=\frac{1}{2}\sum_{j=1}^{n}var(c_{j-i}).var(x_{j})$.

Finally, note that $cov(\bar{y}_{i},\bar{y}_{i'})=cov(y_{i},y_{i'})+cov(b_{i},b_{i'})$. This yields the covariances of $\bar{y}$.

To derive $cov(z_{i},z_{i'})$ and $cov(\bar{z}_{i},\bar{z}_{i'})$ , the required calculus
is nearly identical. We let the reader check by himself/herself.
\end{proof}


\paragraph{Non-linearity function}

We empirically found that reducing the number of non-linearities in the networks simplifies the training of deep neural networks. To support this claim, we conduct a series of experiments on various DCNNs with a varying number of ReLU activations (to reduce the number of non-linearities, we replace some ReLU activations with the identity function). In a second experiment, we replace the ReLU activations with Leaky-ReLU activations and vary the slope of the Leaky ReLU (a higher slope means an activation function that is closer to a linear function). The results of this experiment are presented in Figure~\ref{fig:cifar10_factor} and \ref{fig:cifar10_leaky_relu}. In \ref{fig:cifar10_factor}, ``ReLU(DC)'' means that we interleave on ReLU activation functions between every diagonal-circulant matrix, whereas ReLU(DCDC) means we interleave a ReLU activation every other block etc.  In both Figure~\ref{fig:cifar10_factor} and  Figure~\ref{fig:cifar10_leaky_relu}, we observe that reducing the non-linearity of the networks can be used to train deeper networks. This is an interesting result, since  we can use this technique to adjust the number of parameters in the network, without facing training difficulties. We obtain a maximum accuracy of 0.56 with one ReLU every three layers and leaky-ReLUs with a slope of 0.5. We hence rely on this setting in the experimental section. 

\section{Empirical evaluation}
\label{section:exp}

This experimental section aims at answering the following questions:
\begin{itemize}
    \item[] {\bf Q6.1} -- How do DCNNs compare to other approaches such as ACDC, LDR or other structured approaches?
    \item[] {\bf Q6.2} -- How do DCNNs compare to other compression based techniques?
    \item[] {\bf Q6.3} -- How do DCNNs perform in the context of large scale real-world machine learning applications?  
\end{itemize}

\subsection{Comparison with other structured approaches (Q6.1)}

\begin{figure*}[ht]
\centering
\subfigure[]{
    \includegraphics[scale=0.35]{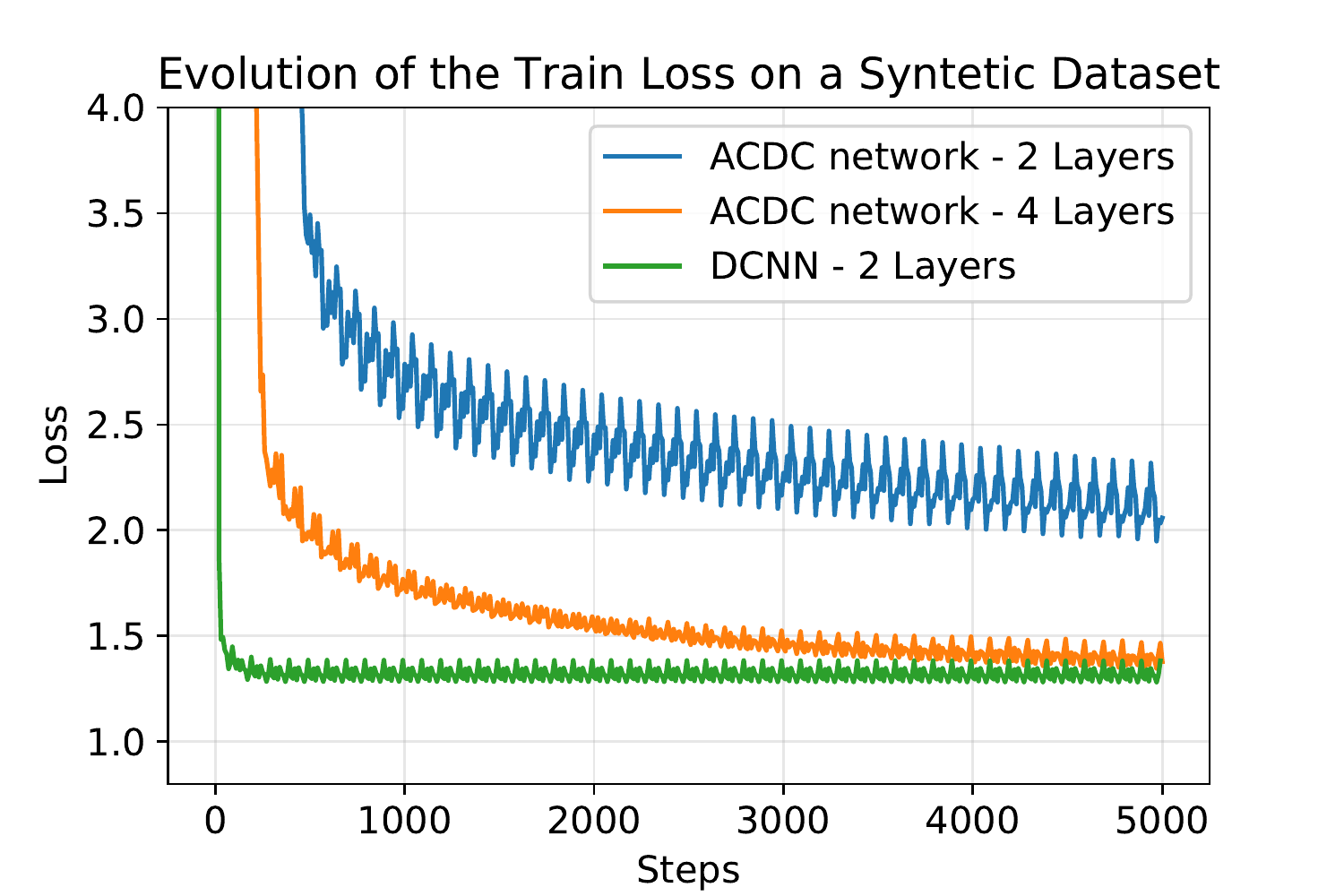}
    \label{fig:adcd_regression}
    }
\subfigure[]{
    \includegraphics[scale=0.35]{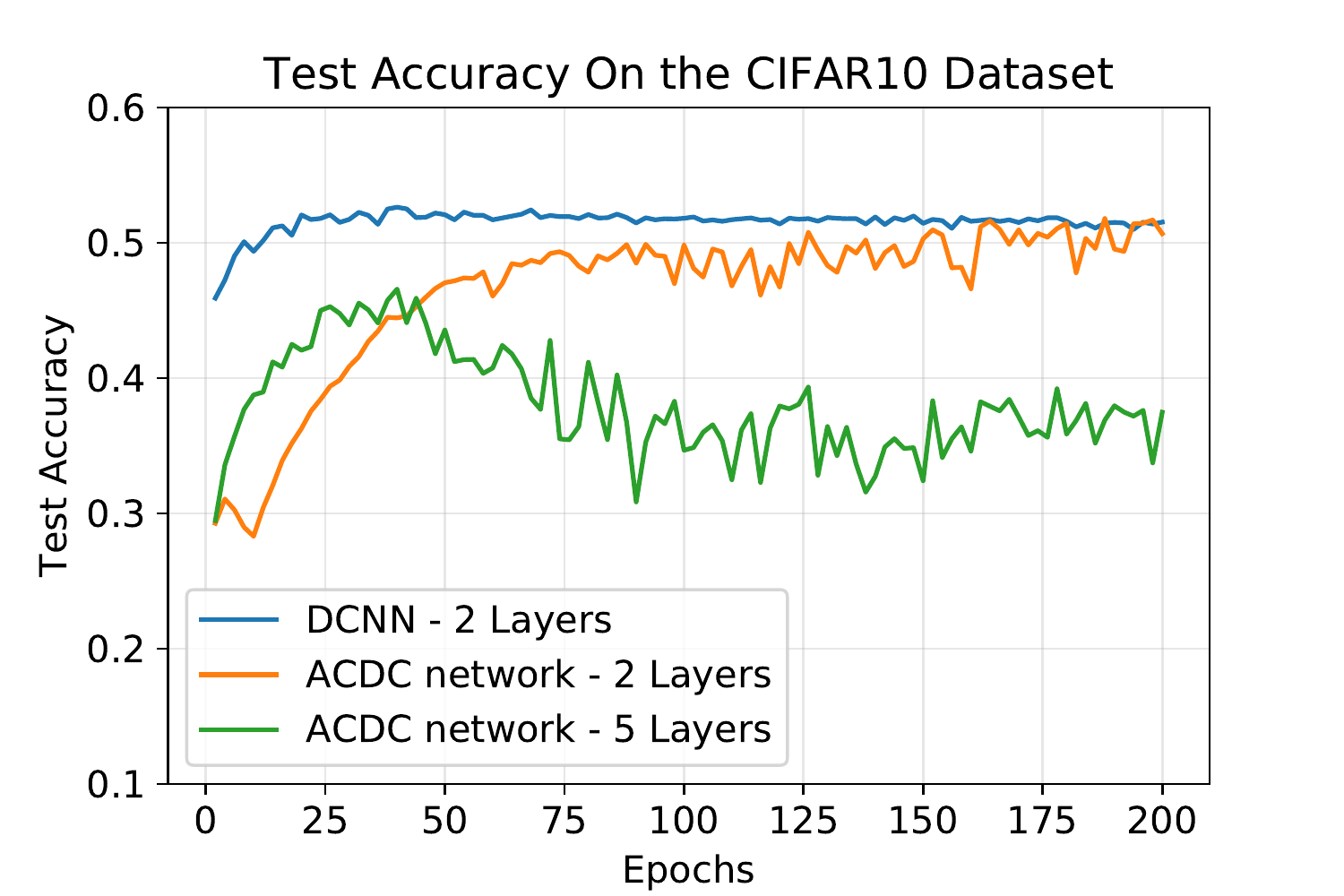}
    \label{fig:acdc_cifar10}
    }
\caption{Comparison of DCNNs and ACDC networks on two different tasks. Figure~\ref{fig:adcd_regression} shows the evolution of the training loss on a regression task with synthetic data. Figure~\ref{fig:acdc_cifar10} shows the test accuracy on the CIFAR-10 dataset.}
\end{figure*}

{\bf Comparison with ACDC \cite{moczulski2015acdc}.} In Section~\ref{related_work}, we have  discussed the differences between the ACDC framework and our approach from a theoretical perspective. In this section, we conduct experiments to compare the performance of DCNNs with neural networks based on ACDC layers. 
We first reproduce the experimental setting from \cite{moczulski2015acdc}, and compare both approaches using only linear networks (i.e. networks without any ReLU activations). The results are presented in Figure~\ref{fig:adcd_regression}. On this simple setting, both architectures demonstrate good performance, however, DCNNs offer better convergence rate.
In Figure~\ref{fig:acdc_cifar10}, we compare neural networks with ReLU activations on CIFAR-10. The synthetic dataset has been created in order to reproduce the experiment on the regression linear problem proposed by~\cite{moczulski2015acdc}. We draw $X$, $Y$ and $W$ from a uniform distribution between [-1, +1] and $\epsilon$ from a normal distribution with mean 0 and variance $0.01$. The relationship between $X$ and $Y$ is define by $Y = XW + \epsilon$. 

We found that networks which are based only on ACDC layers are difficult to train and offer poor accuracy on CIFAR. (We have tried different initialization schemes including the one from the original paper, and the one we propose in this paper.) Moczulski et al. \cite{moczulski2015acdc} manage to train a large VGG network  however these networks are generally highly redundant, the contribution of the structured layer is difficult to quantify. 
We also observe that adding a single dense layer improves the convergence rate of  ACDC in the linear case networks, which explain the good results of \cite{moczulski2015acdc}. However, it is difficult to characterize the true contribution of the ACDC layers when the network involved a large number of other expressive layers.

In contrast,  deep DCNNs can be trained and offer good performance without additional dense layers  (these results are in line with our experiments on the \yt dataset). We can conclude that DCNNs are able to model complex relations at a low cost.

\begin{figure*}[ht]
\centering
\subfigure[]{
    \hspace{0.5cm}
    \centering
    \begin{tikzpicture}[scale=0.5]
\begin{axis}[
    legend cell align={left},
    xlabel={\large \#weights (x1000) },
    ylabel={Test Accuracy},
    xmin=21, xmax=370,
    ymin=0.2, ymax=0.6,
    legend pos=south east,
    ymajorgrids=true,
    grid style=dashed,
	]
  \addplot[color=red, line width=0.25mm, dashed] table [y=accuracy, x=weights]{data/cifar10/type/dense.dat};
  \addplot[mark=triangle, color=blue, line width=0.4mm] table [y=accuracy, x=weights]{data/cifar10/type/circulant.dat};
  \addplot[mark=square, color=red, line width=0.4mm] table [y=accuracy, x=weights]{data/cifar10/type/diag_toeplitz.dat};
  \addplot[mark=o, color=gray, line width=0.4mm] table [y=accuracy, x=weights]{data/cifar10/type/toeplitz.dat};
  \addplot[mark=diamond, color=brown, line width=0.4mm] table [y=accuracy, x=weights]{data/cifar10/type/low_rank.dat};
    \legend{
      Dense (9M weights),
      DCNN,
      DTNN,
      Toeplitz network,
      Low Rank network, 
     }
\end{axis}
\end{tikzpicture}
    \label{fig:cifar10_type}
    }\hspace{0.9cm}
\subfigure[]{
    \centering
    \begin{tikzpicture}[scale=0.5]
\begin{axis}[
    xlabel={\large \#weights (x1000) },
    ylabel={Test Accuracy},
    legend pos=outer north east,
    legend cell align={left},
    ymajorgrids=true,
    grid style=dashed,
    ]
    \addplot[mark=triangle*,blue] coordinates {(140, 0.7017)}; 
    \addplot[mark=triangle*,red] coordinates {(420, 0.7286)}; 
    \addplot[mark=diamond*,blue] coordinates {(110, 0.7111)}; 
    \addplot[mark=diamond*,red] coordinates {(388, 0.7205)}; 
    \addplot[mark=square*,green] coordinates {(124, 0.757)}; 
    \addplot[mark=square*,blue] coordinates {(217, 0.7856)}; 
    \addplot[mark=square*,red] coordinates {(66, 0.7535)}; 
    \addplot[mark=square*,brown] coordinates {(191, 0.764)}; 
    \legend{
        Scattering + LDR-SD (r=1),
        Scattering + LDR-SD (r=10),
        Scattering + Toeplitz-like (r=1),
        Scattering + Toeplitz-like (r=10),
        Scattering + 1 DC,
        Scattering + 3 DC,
        Scattering Avg pooling + 3 DC,
        Scattering by channel + 4 DC,
    }
\end{axis}
\end{tikzpicture}

		
    \label{fig:cifar10_with_channels_xp}
    \hspace{-1.5cm}
    }
    \caption{Figure~\ref{fig:cifar10_type}: network size vs. accuracy compared on Dense networks, DCNNs (our approach), DTNNs (our approach), neural networks based on Toeplitz matrices and neural networks based on Low Rank-based matrices. DCNNs outperforms alternatives structured approaches. Figure~\ref{fig:cifar10_with_channels_xp} shows the accuracy of different structured architecture given the number of trainable parameters.}
\end{figure*}
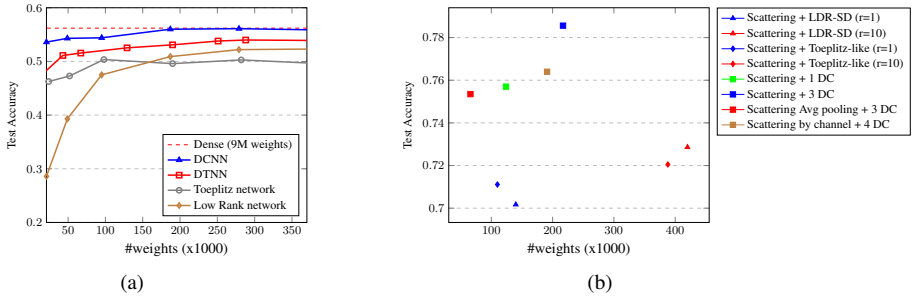

{\bf Comparison with Dense networks, Toeplitz networks and Low Rank networks.}
We now compare DCNNs with other state-of-the-art structured networks by measuring the accuracy on a flattened version of the CIFAR-10 dataset. Our baseline is a dense feed-forward network with a fixed number of weights (9 million weights). We compare with DCNNs and with DTNNs (see below), Toeplitz networks, and Low-Rank networks~\cite{8099498}. We first consider Toeplitz networks which are stacked Toeplitz matrices interleaved with ReLU activations since Toeplitz matrices are closely related to circulant matrices.  Since Toeplitz networks have a different structure (they do not include diagonal matrices), we also experiment using DTNNs, a variant of DCNNs where all the circulant matrices have been replaced by Toeplitz matrices. Finally we conduct experiments using networks based on low-rank matrices as they are also closely related to our work. For each approach, we report the accuracy of several networks with a varying depth ranging from 1 to 40 (DCNNs, Toeplitz networks) and from 1 to 30 (from DTNNs).  For low-rank networks, we used a fixed depth network and increased the rank of each matrix from 7 to 40. We also tried to increase the depth of low rank matrices, but we found that deep low-rank networks are difficult to train so we do not report the results here. We compare all the networks based on the number of weights from 21K (0.2\% of the dense network) to 370K weights (4\% of the dense network) and we report the results in Figure~\ref{fig:cifar10_type}. 
First we can see that the size of the networks correlates positively with their accuracy which demonstrate successful training in all cases. We can also see that the DCNNs achieves the maximum accuracy of 56\% with 20 layers ($\sim$ 200K weights) which as as good as the dense networks with only 2\% of the number of weights. Other approaches also offer good performance but they are not able to reach the accuracy of a dense network.

\begin{table}
  \centering
  \small
  \caption{\small LDR networks compared with DCNNs on a flattend version of CIFAR-10. DCNNs outperform all LDR configurations with fewer weights.$^2$}
  \begin{tabular}{lcc}
    \toprule
    \textbf{Architectures} & \textbf{\#Params} & \textbf{Acc.}  \\
    \midrule
    \textit{Dense} & \textit{9.4M}	& \textit{0.562} \\
    \textbf{\textit{DCNN $(5\ layers)$}} & \textbf{49K}	& \textbf{0.543} \\
    \textbf{\textit{DCNN $(2\ layers)$}} & \textbf{21K} & \textbf{0.536} \\
    LDR--TD	$(r = 2)$	        & 64K	& 0.511 \\
    LDR--TD	$(r = 3)$	        & 70K	& 0.473 \\
    Toeplitz-like $(r=2)$	    & 46K	& 0.483 \\
    Toeplitz-like $(r =3)$	    & 52K    & 0.496 \\
    \bottomrule
    \end{tabular}
    \label{table:xp_ldr}
\end{table}

\begin{table}
  \centering
  \small
  \caption{\small Two depths scattering on CIFAR-10 followed by LDR or DC layer. Networks with DC layers outperform all LDR configurations with fewer weights.}
  \begin{tabular}{lcc}
    \toprule
    \textbf{Architectures} & \textbf{\#Params} & \textbf{Acc.}  \\
    \midrule
    \textbf{DC $(1\ layers)$} & \textbf{124K} & \textbf{0.757} \\
    \textbf{DC $(3\ layers)$} & \textbf{217K} & \textbf{0.785} \\
    \textbf{Ensemble x5 DC $(3\ layers)$} &  \textbf{1.08M} & \textbf{0.811} \\
    LDR-SD $(r=1)$ & 140K & 0.701 \\
    LDR-SD $(r=10)$ & 420K & 0.728 \\
    Toeplitz-like $(r=1)$ & 110K & 0.711 \\
    Toeplitz-like $(r=10)$ & 388K & 0.720 \\
    \bottomrule
    \end{tabular}
    \label{table:xp_ldr_scattering}
\end{table}

\footnotetext[2]{Remark: the numbers may differ from the original experiments by \cite{Thomas_NIPS2018_8119} because we use the original dataset instead of a monochrome version)}

\noindent
{\bf Comparison with LDR networks~\cite{Thomas_NIPS2018_8119}.} We now compare DCNNs with the LDR framework using the network configuration experimented in the original paper: a single LDR structured layer followed by a dense layer.  In the LDR framework, we can change the size of a network by adjusting the rank of the residual matrix, effectively capturing matrices with a structure that is close to a known structure but not exactly (e.g. in the LDR framework, Toeplitz matrices can be encoded with a residual matrix with rank=2, so a matrix that can be encoded with a residual of rank=3 can be seen as Toeplitz-like.).
The results are presented in Table~\ref{table:xp_ldr} and demonstrate that DCNNs outperforms all LDR networks both in terms in size and accuracy. 

{\bf Exploiting image features.} Dense layers and DCNNs are not designed to capture task-specific features such as the translation invariance inherently useful in image classification. We can further improve the accuracy of such general purpose architectures on image classification without dramatically increasing the number of trained parameters by stacking them on top of fixed (i.e. non-trained) transforms such as the scattering transform \cite{mallat2010recursive}. In this section we compare the accuracy of various structured networks, enhanced with the scattering transform, on an image classification task, and run comparative experiments on CIFAR-10. 

Our test architecture consists of 2 depth scattering on the RGB images followed by a batch norm and LDR or DC layer. To vary the number of parameters of Scattering+LDR architecture, we increase the rank of the matrix (stacking several LDR matrices quickly exhausted the memory).
The Figure \ref{fig:cifar10_with_channels_xp} and \ref{table:xp_ldr_scattering} shows the accuracy of these architectures given the number of trainable parameters.

First, we can see that the DCNN architecture very much benefits from the scattering transform and is able to reach a competitive accuracy over 78\%.
We can also see that scattering followed by a DC layer systematically outperforms scattering + LDR or scattering + Toeplitz-like with less parameters.

\subsection{Comparison with other compression based approaches (Q6.2)}


\begin{table}
  \centering
    \caption{Comparison with compression based approaches}
    \small
    \begin{tabular}{lcrc}
    \toprule
    \multicolumn{1}{c}{\textbf{Architecture}} & \multicolumn{1}{c}{\textbf{\#Params}} & \textbf{Error (\%)} \\
    \hline \\
    \textit{LeNet \cite{Lecun98gradient-basedlearning}} & \textit{4 257 674} & \textit{0.61} \\
    \multirow{2}[0]{*}{\textbf{DCNN}} & \textbf{25 620} & \textbf{1.74} \\
          & \textbf{31 764} & \textbf{1.60} \\
    \multirow{2}[0]{*}{HashNet \cite{Chen_Hashing_Trick}} & 46 875 & 2.79 \\
          &  78 125 & 1.99 \\
    \multirow{2}[0]{*}{Dark Knowledge \cite{44873}} & 46 875 & 6.32 \\
          &  78 125 & 2.16 \\
    \bottomrule
    \end{tabular}%
  \label{tab:mnist}%
\end{table}%

We provide a comparison with other compression based approaches such as HashNet \cite{Chen_Hashing_Trick}, Dark Knowledge \cite{44873} and Fast Food Transform (FF) \cite{7410530}. 
Table~\ref{tab:mnist} shows the test error of DCNN against other know compression techniques on the MNIST datasets. We can observe that DCNN outperform easily HashNet \cite{Chen_Hashing_Trick} and Dark Knowledge \cite{44873} with fewer number of parameters. The architecture with Fast Food (FF) \cite{7410530} achieves better performance but with convolutional layers and only $1$ Fast Food Layer as the last Softmax layer.

\subsection{DCNNs for large-scale video classification on the \yt dataset (Q6.3)}

To understand the performance of deep DCNNs on large scale applications, we conducted experiments on the \yt video classification with 3.8 training examples introduced by \cite{abu2016youtube}. Notice that we favour this experiment over ImageNet applications because modern image classification architectures involve a large number of convolutional layers, and compressing convolutional layers is out of our scope. 
Also, as mentioned earlier, testing the performance of DCNN architectures mixed with a large number of expressive layers makes little sense. The \yt includes two datasets describing 8 million labeled videos. Both datasets contain audio and video features for each video. In the first dataset ({\em aggregated}) all audio and video features have been aggregated every 300 frames. The second dataset ({\em full}) contains the descriptors for all the frames. To compare the models we use the GAP metric (Global Average Precision) proposed by~\cite{abu2016youtube}. On the simpler {\em aggregated} dataset we compared off-the-shelf DCNNs with a dense baseline with 5.7M weights.  On the full dataset, we designed three new compact architectures based on the state-of-the-art architecture introduced by \cite{abu2016youtube}. 

\noindent
{\bf Experiments on the {\em aggregated} dataset with DCNNs:}
We compared DCNNs with a dense baseline with 5.7 millions weights. The goal of this experiment is to discover a good trade-off between depth and model accuracy. To compare the models we use the GAP metric (Global Average Precision) following the experimental protocol in~\cite{abu2016youtube}, to compare our experiments. 

Table~\ref{table:youtube_agg_xp} shows the results of our experiments on the {\em aggrgated} \yt dataset in terms of number of weights, compression rate and GAP.
We can see that the compression ratio offered by the circulant  architectures is high. This comes at the cost of a little decrease of GAP measure. The 32 layers DCNN is 46 times smaller than the original model in terms of number of parameters while having a close performance.


\begin{table}
  \centering
  \caption{ \small This table shows the GAP score for the \yt dataset with DCNNs. We can see a large increase in the score with deeper networks.}
  \small
  \begin{tabular}{lccc}
    \toprule
    \textbf{Architecture} & \textbf{\#Weights} &
    \textbf{GAP@20} \\
    \hline \\
    \textit{original} & \textit{5.7M} & \textit{0.773} \\
    4 DC & 25 410  (\textit{\bf 0.44}) & 0.599   \\
    32 DC  & 122 178 \textit{(2.11)} & 0.685   \\
    4 DC + 1 FC & 4.46M \textit{(77)} & \textbf{0.747} \\
  \hline
  \end{tabular}
  \label{table:youtube_agg_xp}
\end{table}

\begin{table}
  \centering
  \caption{ \small This table shows the GAP score for the \yt dataset with different layer represented with our DC decomposition.}
  \small
  \begin{tabular}{lccc}
  \toprule
  \textbf{Architecture} & \textbf{\#Weights} & \textbf{GAP@20} \\
  \hline \\
  \textit{original} & \textit{45M} & \textit{0.846} \\
  DBoF with DC   & 36M (\textit{80}) & 0.838 \\
  FC with DC    & 41M (\textit{91}) & \textbf{0.845} \\
  MoE with DC   & 12M (\textit{\bf 26}) & 0.805 \\
  \hline
  \end{tabular}
  \label{table:youtube_full_xp}
\end{table}

\noindent
{\bf Experiments with DCNNs Deep Bag-of-Frames Architecture:} The Deep Bag-of-Frames architecture can be decomposed into three blocks of layers, as illustrated in Figure~\ref{fig:archi_youtube}.
The first block of layers, composed of the Deep Bag-of-Frames embedding (DBoF), is meant to model an embedding of these frames in order to make a simple representation of each video. A second block of fully connected layers (FC) reduces the dimensionality of the output of the embedding and merges the resulting output with a concatenation operation. Finally, the classification block uses a combination of Mixtures-of-Experts (MoE) ~\cite{716791,45619} and Context Gating~\cite{DBLP:journals/corr/MiechLS17} to calculate the final class probabilities. Table~\ref{table:youtube_full_xp} shows the results in terms of number of weights, size of the model (MB) and GAP on the full dataset, replacing the DBoF block reduces the size of the network without impacting the accuracy. We obtain the best compression ratio by replacing the MoE block with DCNNs (26\%) of the size of the original dataset with a GAP score of 0.805 (95\% of the score obtained with the original architecture). We conclude that DCNN are both theoretically sound and of practical interest in real, large scale applications.

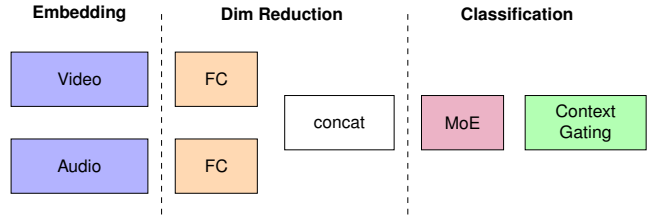
\begin{figure}[ht!]
  \centering
  \scalebox{.72}{\tikzset{%
  >={Latex[width=2mm,length=2mm]},
            base/.style = {rectangle, draw=black, text centered, font=\sffamily},
             box/.style = {base, rounded corners, text depth=3cm, minimum height=4cm, minimum width=3cm},
     transparent/.style = {rectangle, draw=black},
       circulant/.style = {base, fill=yellow!30},
       embedding/.style = {base, fill=blue!30, minimum width=2.5cm, minimum height=1cm},
           other/.style = {base, fill=white!30,  minimum width=2cm, minimum height=1cm},
              fc/.style = {base, fill=orange!30, minimum width=1.5cm, minimum height=1cm},
          gating/.style = {base, fill=green!30, minimum width=2cm, text width=2cm, minimum height=1cm},
             moe/.style = {base, fill=purple!30, minimum width=1.5cm, minimum height=1cm},
}

\begin{tikzpicture}[every node/.style={fill=white, font=\sffamily}, align=center]

  \draw (0.0, +2.)  node [other, draw=none] {\textbf{Embedding}};
  \draw (+3.7, +2.)  node [other, draw=none] {\textbf{Dim Reduction}};
  \draw (+8.0, +2.)  node [other, draw=none] {\textbf{Classification}};

  \draw (0, +0.8)  node [embedding] {Video};
  \draw (0, -0.8)  node [embedding] {Audio};

  \draw (+2.5, +0.8)  node (fc) [fc] {FC};
  \draw (+2.5, -0.8)  node (fc) [fc] {FC};

  \draw (+4.75, 0)  node (fc) [other] {concat};
  \draw (+7.0, 0)  node (moe) [moe] {MoE};
  \draw (+9.25, 0)  node (gating2) [gating] {Context Gating};
 
  \draw (+1.5, +2) [dashed] -- (+1.5, -1.7);
  \draw (+6, +2) [dashed] -- (+6, -1.7);
  
\end{tikzpicture}}
  \caption{This figure shows the state-of-the-art neural network architecture, initially proposed by \cite{abu2016youtube} and later improved by~\cite{DBLP:journals/corr/MiechLS17}, used in our experiment. }
  \label{fig:archi_youtube}
\end{figure}

\paragraph{Architectures \& Hyper-Parameters:} 
For the first set of our experiments (e.g. experiments on CIFAR-10), we train all networks for 200 epochs, a batch size of 200, Leaky ReLU activation with a different slope. We minimize the Cross Entropy Loss with Adam optimizer and use a piecewise constant learning rate of $5 \times 10^{-5}$, $2.5\times10^{-5}$, $5\times10^{-6}$ and $1\times10^{-6}$ after respectively 40K, 60K and 80K steps. For the \yt dataset experiments, we built a neural network based on the SOTA architecture initially proposed by \cite{abu2016youtube} and later improved by~\cite{DBLP:journals/corr/MiechLS17}. Remark that no convolution layer is involved in this application since the input vectors are embeddings of video frames processed using state-of-the-art convolutional neural networks trained on ImageNet. We trained our models with the CrossEntropy loss and used Adam optimizer with a 0.0002 learning rate and a 0.8 exponential decay every 4 million examples. All fully connected layers are composed of 512 units. DBoF, NetVLAD and NetFV are respectively 8192, 64 and 64 of cluster size for video frames and 4096, 32, 32 for audio frames. We used 4 mixtures for the MoE Layer. We used all the available 300 frames for the DBoF embedding. In order to stabilize and accelerate the training, we used batch normalization before each non linear activation and gradient clipping. 

\section{Conclusion}\label{section:conclusion}
This paper deals with the training of diagonal circulant neural networks. To the best of our knowledge, training such networks with a large number of layers had not been done before.
We also endowed this kind of models with theoretical guarantees, hence enriching and refining previous theoretical work from the literature. More importantly, we showed that DCNNs outperform their competing structured alternatives, including the very recent general approach based on LDR networks. Our results suggest that stacking diagonal circulant layers with non linearities improves the convergence rate and the final accuracy of the network. Formally proving these statements constitutes the future directions of this work.  As future work, we would like to generalize the good results of DCNNs to convolutions neural networks. We also believe that circulant matrices deserve a particular attention in deep learning because of their strong ties with convolutions: a circulant matrix operator is equivalent to the convolution operator with circular paddings (as shown in [5]). This fact makes any contribution to the area of circulant matrices particularly relevant to the field of deep learning with impacts beyond the problem of designing compact models. As future work, we would like to generalize our results to deep convolutional neural networks.

\bibliographystyle{ecai}
\bibliography{bibliography}

\setcounter{section}{0}
\setcounter{equation}{0}
\setcounter{figure}{0}
\setcounter{defn}{0}
\setcounter{lem}{0}
\setcounter{cor}{0}
\setcounter{property}{0}
\setcounter{thm}{0}

\newpage
\onecolumn

\begin{adjustwidth}{30pt}{30pt}

\section*{\Huge Supplemental Material -- Understanding and Training Deep Diagonal Circulant Neural Networks}

\vspace{1cm}

\section{Notations \& Definition}
We note $\mathfrak{R}(z)$ and $\mathfrak{I}(z)$  the real and imaginary parts the complex number $z$. We note $\left(\cdot\right)_{t}$ is the $t^{th}$ component of a vector. Let $\ci$ be the imaginary number defined by $\ci^2=-1$. 
Define $\mathbf{1}_{n}$ as the \emph{n-}vector of ones. Also, we note $[n]=\{1,\ldots , n\}$. The rectified linear unit on the complex domain is defined by $ReLU(z)=\max\left(0,\mathfrak{R}(z)\right)+\ci\max\left(0,\mathfrak{I}(z)\right)$. The notation $\left|\cdot\right|$ refers to the complex modulus.
Finally, define the \emph{cyclic shift }matrix $S \in \mathbb{R}^{n\times n}$ as follows: 

\[
S=\left[\begin{array}{ccccc}
0 &  &  &  & 1\\
1 & 0\\
 & 1 & \ddots\\
 &  & \ddots & 0\\
 &  &  & 1 & 0
\end{array}\right]
\]
\noindent
We introduce some necessary definitions regarding neural networks.  

\begin{defn}[\drn]\label{drn-appendix}
Given $L$ weight matrices $W = (W_1, \ldots, W_L)$ with $W_i \in \mathbb C^{n\times n}$ and  $L$ bias vectors $b = (b_1, \ldots, b_L)$  with  $b_i \in \mathbb C^n$, a {\em deep \relu network} is a function $f_{W_L, b_L} : \mathbb C^n \rightarrow \mathbb C^n$ such that $f_{W, b}(x) =  (f_{W_L, b_L} \circ \ldots \circ f_{W_1, b_1})(x)$ where $f_{W_i, b_i}(x) = \phi(W_i x + b_i)$ and $\phi(.)$ is a \relu non-linearity
\footnote{Because our networks deal with complex numbers, we use an extension of the \relu function to the complex domain. The most straightforward extension defined in \cite{DBLP:conf/iclr/TrabelsiBZSSSMR18} is as follows: $\mathrm{\relu}(z)=\mathrm{ReLU}\left(\mathfrak{R}(z)\right)+i\mathrm{ReLU}\left(\mathfrak{I}(z)\right)$, where $\mathfrak{R}$ and $\mathfrak{I}$ refer to the real and imaginary parts of $z$.}
In the rest of this paper, we call $L$ and $n$ respectively the depth and the width of the network. Moreover, we call {\em total rank $k$}, the sum of the ranks of the matrices $W_{1}\ldots W_{L}$. i.e. $k = \sum_{i=1}^L rank(W_i)$.
\end{defn}

In the rest of this paper, we call $L$ and $n$ respectively the depth and the width of the network. Moreover, we call {\em total rank $k$}, the sum of the ranks of the matrices $W_{1}\ldots W_{L}$. i.e. $k = \sum_{i=1}^L rank(W_i)$.

\section{Proofs of Section 3}

\begin{thm}
(Reformulation from Huhtanen et al. \cite{Huhtanen2015})\label{thm:huhtanen-appendix}
For any given matrix $A\in\mathbb{C}^{n\times n}$, for any $\epsilon > 0$, there exists a sequence of matrices $B_1 \ldots B_{2n-1}$ where $B_{i}$ is a circulant matrix if $i$ is odd, and a diagonal matrix otherwise, such that $\left\Vert B_{1}B_{2}\ldots B_{2n-1}-A \right\Vert < \epsilon$.
Moreover, if $A$ can be decomposed as $A=\sum_{i=1}^{k}D_{i}S^{i-1}$ where $S$ is the cyclic-shift matrix and $D_{1}\ldots D_{k}$ are diagonal matrices, then $A$ can be written as a product $B_{1}B_{2}\ldots B_{2k-1}$ where $B_{i}$ is a circulant matrix if $i$ is odd, and a diagonal matrix otherwise.
\end{thm}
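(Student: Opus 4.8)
The statement splits into two parts. The first assertion --- that every $A\in\mathbb{C}^{n\times n}$ lies within $\epsilon$ of a product $B_1B_2\cdots B_{2n-1}$ of alternating circulant and diagonal matrices --- is precisely the main factorization theorem of Huhtanen and Per\"am\"aki \cite{Huhtanen2015}, and I would simply invoke it: its proof shows that the image of the polynomial map $(B_1,\dots,B_{2n-1})\mapsto B_1\cdots B_{2n-1}$, restricted to the prescribed circulant/diagonal pattern, is dense in $\mathbb{C}^{n\times n}$, via an induction that successively fixes the cyclic diagonals of $A$ together with a closure argument. So the only thing left to prove is the \emph{moreover} clause.

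\textbf{Tools for the moreover clause.} Three elementary facts drive the argument: (i) every power $S^{j}$ of the cyclic-shift matrix is circulant, and circulant matrices form a commutative algebra, so any product of circulants is again circulant; (ii) conjugating a diagonal matrix by a permutation matrix --- in particular by a power of $S$ --- yields a diagonal matrix, hence $D\,S^{j}=S^{j}D'$ for a diagonal $D'$; (iii) consequently $A=\sum_{i=1}^{k}D_{i}S^{i-1}$ can be rewritten as $A=\sum_{i=1}^{k}S^{i-1}\widetilde{D}_{i}=\widetilde{D}_{1}+S\,A'$, where $A'=\sum_{j=1}^{k-1}S^{j-1}\widetilde{D}_{j+1}$ has the same shape but one fewer cyclic diagonal. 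This suggests an induction on $k$: the base case $k=1$ is a single diagonal matrix, which fits the alternating pattern trivially; and each inductive step should absorb one cyclic diagonal at the cost of exactly two extra factors, so that a matrix with $k$ cyclic diagonals needs $1+2(k-1)=2k-1$ factors, matching the claim.

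\textbf{The crux.} The hard step is turning $A=\widetilde{D}_{1}+S\,A'$ --- where, by the inductive hypothesis, $A'$ is already an alternating product of $2k-3$ factors beginning and ending with a circulant --- back into a \emph{single} alternating product of $2k-1$ factors. Multiplying by $S$ and merging it with the leading circulant factor of $A'$ keeps the length at $2k-3$ and only shifts the band, so the genuine difficulty is the additive term $\widetilde{D}_{1}+(\cdot)$: addition of a diagonal is not a product operation, and indeed $I+DC$ is not of the form (diagonal)$\cdot$(circulant) in general, so one really must spend the two extra slots cleverly. Writing the desired identity entrywise, after normalizing by a diagonal one is left to choose a circulant factor and an unknown diagonal so that a prescribed system of equations holds; this collapses to a linear recurrence for the unknown diagonal around the cyclic group $\mathbb{Z}/n\mathbb{Z}$, and the consistency condition for that recurrence (a ``product around the cycle equals $1$'' condition) is the obstacle. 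I would resolve it by exploiting the remaining degrees of freedom in the circulant factor to meet the consistency condition, and --- should a degenerate configuration force it --- fall back on a perturbation/continuity argument (harmless here, since the ambient statement is already up to $\epsilon$ via the first part), or use the divisibility hypothesis $k\mid n$ that accompanies the applications of this theorem. Everything outside this recurrence-solvability step is bookkeeping with the commuting algebra of circulant matrices and the conjugation identity (ii).
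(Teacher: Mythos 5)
The first thing to note is that the paper does not actually prove this theorem: both in the body and in the appendix it is stated as a ``Reformulation from Huhtanen et al.'' and used as an imported black box (the appendix restates it, adds the \emph{moreover} clause, and moves directly to the proof of the rank-based decomposition, which invokes it). So there is no in-paper argument to compare yours against. Your decision to cite Huhtanen and Per\"am\"aki for the density statement is exactly what the authors do, and the honest reading is that the \emph{moreover} clause is also meant to be covered by that citation: it is, in substance, their factorization of matrices supported on $k$ cyclic diagonals, which is the engine behind their $2n-1$ bound.

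Judged as a standalone proof, however, your treatment of the \emph{moreover} clause has a genuine gap, and it sits exactly where you say it does. Your bookkeeping facts are correct (powers of $S$ are circulant and circulants commute; $DS^{j}=S^{j}D'$ with $D'$ diagonal), and the induction ``one cyclic diagonal per two extra factors'' is the right skeleton. But the step that turns $\widetilde{D}_{1}+SA'$ back into a single alternating product is never established: the cyclic consistency condition for your recurrence on $\mathbb{Z}/n\mathbb{Z}$ can genuinely fail, and the clause as literally stated already breaks at $k=1$, where a product of $2k-1=1$ factors must be a single circulant matrix while $A=D_{1}$ is an arbitrary diagonal matrix. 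This shows the exact factorization only holds generically, and any complete argument must either impose a nondegeneracy hypothesis or retreat to an $\epsilon$-approximation --- which is how Huhtanen and Per\"am\"aki actually phrase their result, and why every downstream statement in this paper carries an $\epsilon$. Your proposed fallbacks (perturbation, or spending the spare degrees of freedom of the circulant factor on the consistency condition) are the right instincts, but neither is carried out, so the crux remains open in your write-up; also note that the hypothesis $k\mid n$ belongs to the rank-based decomposition theorem, not to this one, so it is not available here. Given that the paper itself leans entirely on the citation, the cleanest repair is to do the same for the \emph{moreover} clause rather than to reprove it.
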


\begin{thm}(Rank-based circulant decomposition)
\label{prop:rank-decomposition-appendix}Let $A\in\mathbb{C}^{n\times n}$ be
a matrix of rank at most $k$. Assume that $n$ can be divided by $k$. For
any $\epsilon>0$, there exists a sequence of $4k+1$ matrices $B_{1},\ldots,B_{4k+1},$ where $B_{i}$ is a circulant matrix if $i$ is odd, and a diagonal matrix otherwise, such that $\Vert B_1B_2\ldots B_{4k+1} - A\Vert < \epsilon$
\end{thm}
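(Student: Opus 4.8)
The plan is to reduce to the refined form of Theorem~\ref{thm:huhtanen-appendix}: it converts a matrix of the form $\sum_{i=1}^{m}D_iS^{i-1}$ (with $S$ the cyclic shift and the $D_i$ diagonal) into a product of $2m-1$ alternating circulant/diagonal matrices. A rank-$k$ matrix is not itself of this shape for small $m$ --- e.g.\ the all-ones matrix has rank $1$ but is not close in operator norm to any cyclically banded matrix --- so one cannot apply the theorem directly; instead I would write $A$ as a \emph{product} of a constant number of pieces, some banded (handled by the theorem) and one a full-support circulant.

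First, take a rank factorization $A=\sum_{j=1}^{k}u_jv_j^{\top}$ and note that $A=N_G\,N_H$, where $N_G\in\mathbb{C}^{n\times n}$ has $u_1,\dots,u_k$ as its first $k$ columns and zeros elsewhere, and $N_H\in\mathbb{C}^{n\times n}$ has $v_1^{\top},\dots,v_k^{\top}$ as its first $k$ rows and zeros elsewhere; indeed $(N_GN_H)_{ab}=\sum_{c=0}^{k-1}(u_{c+1})_a(v_{c+1})_b=A_{ab}$. Each of $N_G,N_H$ is a rank-$\le k$ matrix with a \emph{fixed} coordinate subspace on one side, and the remaining task is to realize such a matrix with $O(k)$ alternating circulant/diagonal factors.

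This is where the hypothesis $k\mid n$ enters. Writing $n=kp$ and grouping coordinates into $p$ consecutive blocks of size $k$, one checks that $N_G=\mathrm{bdiag}(G_0,\dots,G_{p-1})\cdot R\cdot P_k$, where the $G_i\in\mathbb{C}^{k\times k}$ are the block-rows of $[u_1|\cdots|u_k]$, $P_k=\mathrm{diag}(1,\dots,1,0,\dots,0)$ is the rank-$k$ diagonal projector, and $R=\mathbf 1_p\mathbf 1_p^{\top}\otimes I_k$ is an $n\times n$ circulant (it is circulant because $\mathbf 1_p\mathbf 1_p^{\top}$ is a $p\times p$ circulant, and $M\otimes I_k$ is an $n\times n$ circulant whenever $M$ is a $p\times p$ circulant). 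The elementary point is that a block-diagonal matrix with $k\times k$ blocks has all its nonzero entries on $2k-1$ cyclic diagonals, each restricting to an arbitrary diagonal matrix, so $\mathrm{bdiag}(G_\bullet)=S^{-(k-1)}\sum_{i=1}^{2k-1}\widehat D_iS^{i-1}$ with the $\widehat D_i$ diagonal; Theorem~\ref{thm:huhtanen-appendix} then turns it into $O(k)$ alternating factors, and after absorbing $R$ and $P_k$ into adjacent circulant resp.\ diagonal factors, $N_G$ --- and symmetrically $N_H$ by transposition --- becomes an $O(k)$-factor alternating product. Finally I would multiply $N_G\cdot N_H$, merging circulant-with-circulant and diagonal-with-diagonal at the junction (the two projectors collapse since $P_k^2=P_k$), and push the bookkeeping down to exactly $4k+1$ alternating factors with odd indices circulant. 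The $\epsilon$ comes for free from Theorem~\ref{thm:huhtanen-appendix}, which is itself already only approximate, and the genericity used en route (invertibility of certain circulants, the rank being exactly $k$) is absorbed into the same limiting argument.

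The hard part will be exactly this final accounting: done naively, decomposing $N_G$ and $N_H$ separately costs roughly $8k$ factors, so the proof must share work between the two sides --- keeping one side in a ``one-sided'' form, paying only once for the $\mathbf 1\mathbf 1^{\top}$-type circulant, and exploiting the slack left by the free entries of $N_G,N_H$ and by the choice of rank factorization --- in order to halve this down to $4k+1$ while still covering an arbitrary rank-$k$ matrix (including ones, like the all-ones matrix, that are nowhere near banded). The divisibility $k\mid n$ is precisely what makes the block reshaping exact; for general $n$ one would instead pad, which is presumably why the statement is confined to the divisible case.
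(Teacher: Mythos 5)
Your setup is sound as far as it goes --- the factorization $A=N_GN_H$ through the first $k$ coordinates, the observation that $\mathbf 1_p\mathbf 1_p^{\top}\otimes I_k$ is circulant, the reduction of a banded matrix $\sum_i D_iS^{i-1}$ to alternating factors via the ``moreover'' clause of Theorem~\ref{thm:huhtanen-appendix}, and the merging of adjacent like factors are all correct. But the count you arrive at is $8k+O(1)$, not $4k+1$, and the missing factor of two is not bookkeeping slack recoverable by ``sharing work between the two sides'': it is built into your choice of banded factor. A block-diagonal matrix with $k\times k$ blocks occupies $2k-1$ cyclic diagonals, so each of $N_G$ and $N_H$ costs $2(2k-1)-1=4k-3$ alternating factors before you even attach $R$ and $P_k$; no amount of merging at the junction gets you below roughly $8k$.

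The idea you are missing is that the one-sided factor need not be the natural block reshaping. The paper's proof writes the left singular factor $U$ (whose last $n-k$ columns vanish) as $WRO$ with $O$ the rank-$k$ diagonal projector, $R=circ(r)$ a circulant, and $W=\sum_{i=1}^{k}D_iS^{i-1}$ supported on only $k$ cyclic diagonals --- half of yours. The entries of the $D_i$ are not read off from $U$; they are obtained by solving the linear system $WRO=U$, which decouples into $n$ blocks of size $k\times k$, each block a Toeplitz matrix built from a length-$k$ window of $(r_1,\dots,r_n,r_1,\dots)$. Choosing $r_j=1$ exactly when $k$ divides $j$ and $r_j=0$ otherwise makes every block a permutation matrix, hence invertible --- this is where $k\mid n$ is really used, not merely to make a block reshaping exact. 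With $W$ costing $2k-1$ factors instead of $4k-3$, the two sides plus the middle diagonal $O\Sigma O^{\top}$ assemble into the claimed bound. Without this device (or an equivalent one), your route proves the statement only with $4k+1$ replaced by roughly $8k$.
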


\begin{proof} \emph{(Theorem \ref{prop:rank-decomposition-appendix})}
Let $U\Sigma V^{T}$ be the SVD decomposition of $M$ where $U,V$
and $\Sigma$ are $n\times n$ matrices. Because $M$ is of rank $k$,
the last $n-k$ columns of $U$ and $V$ are null. In the following,
we will first decompose $U$ into a product of matrices $WRO$, where
$R$ and $O$ are respectively circulant and diagonal matrices, and
$W$ is a matrix which will be further decomposed into a product of
diagonal and circulant matrices. Then, we will apply the same decomposition
technique to $V$. Ultimately, we will get a product of $4k+2$ matrices
alternatively diagonal and circulant.

Let $R=circ(r_{1}\ldots r_{n})$. Let $O$ be a $n\times n$ diagonal
matrix where $O_{i,i}=1$ if $i\le k$ and $0$ otherwise. The $k$
first columns of the product $RO$ will be equal to that of $R$,
and the $n-k$ last colomns of $RO$ will be zeros. For example, if
$k=2$, we have:

\[
RO=\left(\begin{array}{ccccc}
r_{1} & r_{n} & 0 & \cdots & 0\\
r_{2} & r_{1}\\
r_{3} & r_{2} & \vdots &  & \vdots\\
\vdots & \vdots\\
r_{n} & r_{n-1} & 0 & \cdots & 0
\end{array}\right)
\]

Let us define $k$ diagonal matrices $D_{i}=diag(d_{i1}\ldots d_{in})$
for $i\in[k]$. For now, the values of $d_{ij}$ are unknown, but
we will show how to compute them. Let $W=\sum_{i=1}^{k}D_{i}S^{i-1}$.
Note that the $n-k$ last columns of the product $WRO$ will be zeros.
For example, with $k=2$, we have:

\[
W=\left[\begin{array}{ccccc}
d_{1,1} &  &  &  & d_{2,1}\\
d_{2,2} & d_{1,2}\\
 & d_{2,3} & \ddots\\
 &  & \ddots\\
 &  &  & d_{2,n} & d_{1,n}
\end{array}\right]
\]

\[
WRO=\left(\begin{array}{ccccc}
r_{1}d_{11}+r_{n}d_{21} & r_{n}d_{11}+r_{n-1}d_{21} & 0 & \cdots & 0\\
r_{2}d_{12}+r_{1}d_{22} & r_{1}d_{12}+r_{n}d_{22}\\
 &  & \vdots &  & \vdots\\
\vdots & \vdots\\
r_{n}d_{1n}+r_{n-1}d_{2n} & r_{n-1}d_{1n}+r_{n-2}d_{2n} & 0 & \cdots & 0
\end{array}\right)
\]
We want to find the values of $d_{ij}$ such that $WRO=U$. We can
formulate this as linear equation system. In case $k=2$, we get:

\[
\left(\begin{array}{cccccccc}
r_{n} & r_{1}\\
r_{n-1} & r_{n}\\
 &  & r_{1} & r_{2}\\
 &  & r_{n} & r_{1}\\
 &  &  &  & r_{2} & r_{3}\\
 &  &  &  & r_{1} & r_{2}\\
 &  &  &  &  &  & \ddots\\
 &  &  &  &  &  &  & \ddots
\end{array}\right)\times\left(\begin{array}{c}
d_{2,1}\\
d_{1,1}\\
d_{2,2}\\
d_{1,2}\\
d_{2,3}\\
d_{1,3}\\
\vdots\\
\vdots
\end{array}\right)=\left(\begin{array}{c}
U_{1,1}\\
U_{1,2}\\
U_{2,1}\\
U_{2,2}\\
\\
\\
\vdots\\
\\
\end{array}\right)
\]
The $i^{th}$ bloc of the bloc-diagonal
matrix is a Toeplitz matrix induced by a subsequence of length $k$ of $(r_1,\ldots r_n,r_1 \ldots r_n)$. Set $r_{j}=1$ for all $j\in\{k,2k,3k,\ldots n\}$
and set $r_{j}=0$ for all other values of $j$. Then it is easy to
see that each bloc is a permutation of the identity matrix. Thus,
all blocs are invertible. This entails that the block diagonal matrix
above is also invertible. So by solving this set of linear equations,
we find $d_{1,1}\ldots d_{k,n}$ such that $WRO=U$. We can apply
the same idea to factorize $V=W'.R.O$ for some matrix $W'$. Finally,
we get
\[
A=U\Sigma V^{T}=WRO\Sigma O^{T}R^{T}W^{'T}
\]

Thanks to Theorem~\ref{thm:huhtanen-appendix}, $W$ and $W'$ can both be
factorized in a product of $2k-1$ circulant and diagonal matrices.
Note that $O\Sigma O^{T}$ is diagonal, because all three are diagonal.
Overall, $A$ can be represented with a product of $4k+2$ matrices,
alternatively diagonal and circulant.
\end{proof}

\section{Proofs of Section 4}

\begin{lem}
\label{lem:product_of_mat_to_DNN}Let $W_{L},\ldots W_{1}\in\mathbb{C}^{n\times n}$,
$b\in\mathbb{C}^{n}$ and let $\mathcal{X}\subset\mathbb{C}^{n}$
be a bounded set. There exists $\beta_{L} \ldots \beta_{1} \in \mathbb{C}^{n}$
such that for all $x \in \mathcal{X}$ we have $f_{W_{L},\beta_{L}}\circ\ldots\circ f_{W_{1},\beta_{1}}(x) = ReLU\left(W_{L}W_{L-1} \ldots W_{1}x+b \right)$.
\end{lem}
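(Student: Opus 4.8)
The plan is to compose the affine maps layer by layer and show that a careful choice of the bias vectors lets all but the final ReLU act as the identity on the relevant domain. The key observation is that ReLU acts as the identity on any vector whose real and imaginary parts are componentwise nonnegative, so if we can guarantee that the pre-activation of every intermediate layer lands in the (shifted) nonnegative orthant, the intermediate nonlinearities disappear and the composition collapses to a single affine map followed by one ReLU.

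First I would proceed by induction on the number of layers. For the base case $L=1$ there is nothing to do: take $\beta_1 = b$ and $f_{W_1,\beta_1}(x) = ReLU(W_1 x + b)$. For the inductive step, suppose the claim holds for $L-1$ layers, so there exist $\beta_{L-1},\ldots,\beta_1$ with $g(x) := f_{W_{L-1},\beta_{L-1}}\circ\ldots\circ f_{W_1,\beta_1}(x) = ReLU(W_{L-1}\ldots W_1 x + b')$ for any target bias $b'$ we like. The idea is to pick $b'$ large enough (componentwise, in both real and imaginary parts) that $W_{L-1}\ldots W_1 x + b'$ has nonnegative real and imaginary parts for every $x \in \mathcal{X}$; this is possible because $\mathcal{X}$ is bounded, hence $\{W_{L-1}\ldots W_1 x : x \in \mathcal{X}\}$ is bounded, so a single constant shift $c\mathbf{1}_n$ (with appropriate complex constant $c = t + \ci t$ for $t$ large) dominates it. With that choice $g(x) = W_{L-1}\ldots W_1 x + b'$ exactly on $\mathcal{X}$. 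Then I would set the last layer's bias to $\beta_L = b - W_L b'$, so that $f_{W_L,\beta_L}(g(x)) = ReLU\!\left(W_L(W_{L-1}\ldots W_1 x + b') + b - W_L b'\right) = ReLU(W_L W_{L-1}\ldots W_1 x + b)$, which is exactly the desired output.

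There is one subtlety to handle carefully: the bias $b'$ fed to the inductive hypothesis is itself produced by shifting at one layer, and I must make sure the shift that realizes $b'$ does not itself break the nonnegativity needed at earlier layers. This is why the induction should be set up to prove the slightly stronger statement ``for every $b' \in \mathbb{C}^n$ there exist biases realizing $ReLU(W_{L-1}\ldots W_1 x + b')$ on $\mathcal{X}$'' — i.e. the target intermediate bias is a free parameter, not fixed — so that at each stage we are free to choose whatever constant shift the next layer demands. The bound on how large the shift must be only depends on $\|\mathcal{X}\|$ and the operator norms $\|W_i\|$, all finite, so no issue arises.

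The main obstacle, such as it is, is purely bookkeeping: verifying that the complex ReLU really does act as the identity on the shifted orthant (immediate from the definition $ReLU(z) = \max(0,\mathfrak{R}(z)) + \ci\max(0,\mathfrak{I}(z))$), and tracking the telescoping of the bias terms through the composition so that the final affine part is exactly $W_L\ldots W_1 x + b$ with no leftover correction. Since the intermediate activations are forced to be identities, the composition of affine maps is again affine and the computation of the net bias is a finite linear recursion that closes exactly, giving the claimed equality (not merely an $\epsilon$-approximation) for all $x \in \mathcal{X}$.
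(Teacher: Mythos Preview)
Your proposal is correct and follows essentially the same approach as the paper: both arguments choose intermediate biases large enough that every pre-activation lands in the nonnegative orthant (so the complex ReLU acts as the identity), and then correct the accumulated shift at the final layer to recover the target bias $b$. The only cosmetic difference is that you package this as an induction on $L$ with a free target bias $b'$, whereas the paper writes out an explicit uniform shift $\Omega\mathbf{1}_n$ and a closed-form recursion for the $\beta_j$; the content is identical.
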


\begin{proof} \emph{(Lemma \ref{lem:product_of_mat_to_DNN})}
Define $S=\left\{ \left(\left(\prod_{k=1}^{j}W_{k}\right)x\right)_{t}:x\in\mathcal{X},t\in[n],j\in[L]\right\} $.
Let $\Omega=\max\left\{ \mathfrak{R}(v):v\in S\right\} +\ci \max\left\{ \mathfrak{I}(v):v\in S\right\} $.
Intuitively, the real and imaginary parts of $\Omega$ are the largest any activation in the network can have.
Define $h_{j}(x) = W_{j}x + \beta_{j}$. Let $\beta_{1}=\Omega\mathbf{1}_{n}$. Clearly, for
all $x \in \mathcal{X}$ we have $h_{1}(x)\ge0$, so $ReLU \circ h_{1}(x) = h_{1}(x)$.
More generally, for all $ j < n-1$ define $\beta_{j+1} = \mathbf{1}_{n} \Omega - W_{j+1} \beta_{j}$.
It is easy to see that for all $j < n$ we have $h_{j} \circ \ldots \circ h_{1}(x) = W_{j}W_{j-1} \ldots W_{1}x + \mathbf{1}_{n} \Omega$.
This guarantees that for all $j < n$, $h_{j} \circ \ldots \circ h_{1}(x) = ReLU \circ h_{j} \circ \ldots \circ ReLU \circ h_{1}(x)$.
Finally, define $\beta_{L} = b-A_{L} \beta_{L-1}$. We have, $ReLU \circ h_{L} \circ \ldots \circ ReLU \circ h_{1}(x) = ReLU \left(W_{j} \ldots W_{1}x+b \right)$. 
\end{proof}

\begin{lem}\label{mainth_-appendix}
Let $\mathcal{N}$ be a deep ReLU network of width $n$ and depth $L$, and let $\mathcal{X} \subset \mathbb{C}^{n}$ be a bounded set. For any $\epsilon > 0$, there exists a DCNN $\mathcal{N}'$ of width $n$ and of depth $(2n-1)L$ such that $\Vert \mathcal{N}(x) - \mathcal{N}'(x) \Vert < \epsilon$ for all $x \in \mathcal{X}$.
\end{lem}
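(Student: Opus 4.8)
The plan is to expand each ReLU layer of $\mathcal{N}$ into a block of $2n-1$ diagonal-circulant layers, using Theorem~\ref{thm:huhtanen-appendix} to approximate each weight matrix by a product of alternating circulant and diagonal matrices, and Lemma~\ref{lem:product_of_mat_to_DNN} to absorb the original bias together with the extra ReLUs that appear inside the block. The mathematical content is essentially all in those two results; this lemma is a packaging argument plus an error-propagation estimate.

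First I would set up the approximating network. Write $\mathcal{N} = f_{W_L,b_L}\circ\cdots\circ f_{W_1,b_1}$. For a parameter $\delta>0$ to be fixed later, Theorem~\ref{thm:huhtanen-appendix} gives for each $i$ a product $P_i = B^{(i)}_1\cdots B^{(i)}_{2n-1}$ of alternating circulant and diagonal matrices with $\Vert P_i - W_i\Vert < \delta$. Restricting to $\delta \le 1$ we have $\Vert P_i\Vert \le \Vert W_i\Vert + 1$ for every $i$, so there is a radius $R$, \emph{independent of $\delta$}, such that every intermediate activation of the perturbed network $\widetilde{\mathcal N} = \widetilde f_{P_L,b_L}\circ\cdots\circ\widetilde f_{P_1,b_1}$ (with $\widetilde f_{P_i,b_i}(x)=\phi(P_ix+b_i)$) on any $x\in\mathcal X$ stays in the ball of radius $R$; let $\mathcal X_{i-1}$ denote the resulting bounded set of inputs fed to the $i$-th block.

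Next I would realize $\widetilde{\mathcal N}$ as a DCNN of width $n$ and depth $(2n-1)L$. Each circulant factor $B^{(i)}_{2j-1}$ equals $\mathrm{Id}\cdot B^{(i)}_{2j-1}$ (a trivial diagonal times a circulant) and each diagonal factor $B^{(i)}_{2j}$ equals $B^{(i)}_{2j}\cdot\mathrm{Id}$ (a diagonal times a trivial circulant, the identity being circulant), so concatenating the $L$ blocks produces a list of $(2n-1)L$ matrices, each of the required diagonal-circulant form. Applying Lemma~\ref{lem:product_of_mat_to_DNN} to the $i$-th block, with matrices $B^{(i)}_1,\ldots,B^{(i)}_{2n-1}$, bias $b_i$ and input set $\mathcal X_{i-1}$, yields bias vectors for which the block computes exactly $x\mapsto\mathrm{ReLU}(P_ix+b_i)$ on $\mathcal X_{i-1}$, with a genuine ReLU on each of its $2n-1$ layers. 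Stacking the $L$ blocks gives a DCNN $\mathcal N'$ with $\mathcal N'(x)=\widetilde{\mathcal N}(x)$ for all $x\in\mathcal X$.

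Finally I would bound $\Vert \mathcal N(x)-\mathcal N'(x)\Vert = \Vert \mathcal N(x)-\widetilde{\mathcal N}(x)\Vert$ by a routine induction on depth: the complex ReLU is $1$-Lipschitz, $\Vert P_iy - W_iy\Vert \le \delta\Vert y\Vert \le \delta R$ for $y$ in the relevant bounded set, and $\Vert W_i\Vert$ is fixed, so one obtains $\Vert \mathcal N(x)-\mathcal N'(x)\Vert \le C\delta$ with a constant $C$ depending only on $L$, $R$ and $\max_i\Vert W_i\Vert$. Choosing $\delta=\min(1,\epsilon/C)$ finishes the proof. The only point requiring care — and the step I would call the "obstacle" — is the order of quantifiers: the radius $R$ and hence $C$ must be pinned down before $\delta$ is chosen (via the crude estimate $\Vert P_i\Vert\le\Vert W_i\Vert+1$), so that $C$ does not secretly depend on $\delta$; everything else is bookkeeping around the two cited results.
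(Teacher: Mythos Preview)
Your proposal is correct and follows essentially the same route as the paper: approximate each $W_i$ by an alternating diagonal/circulant product via Theorem~\ref{thm:huhtanen-appendix}, then use Lemma~\ref{lem:product_of_mat_to_DNN} to realise each block as a genuine ReLU sub-network, and finally stack the blocks. The paper dispatches the error analysis with a one-line appeal to continuity, whereas you make it quantitative (bounding activations uniformly for $\delta\le 1$ so that the Lipschitz constant $C$ is fixed before $\delta$ is chosen); this extra care is welcome but does not constitute a different argument.
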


\begin{proof} \emph{(Lemma \ref{mainth_-appendix})}
Assume $\mathcal{N}=f_{W_{L},b_{L}} \circ \ldots \circ f_{W_{1},b_{1}}$.
By theorem $\text{\ref{thm:huhtanen-appendix}}$, for any $\epsilon '> 0$,
any matrix $W_{i}$, there exists a sequence of $2n-1$ matrices $C_{i,n} D_{i,n-1} C_{i,n-1}\ldots D_{i,1}C_{i,1}$
such that $\left\Vert\prod_{j=0}^{n-1}D_{i,n-j}C_{i,n-j}-W_{i}\right\Vert<\epsilon'$,
where $D_{i,1}$ is the identity matrix. By lemma \ref{lem:product_of_mat_to_DNN},
we know that there exists $\left\{ \beta_{ij}\right\} _{i\in[L],j\in[n]}$
such that for all $i\in[L]$, $f_{D_{in}C_{in},\beta_{in}} \circ \ldots \circ f_{D_{i1}C_{i1},\beta_{i1}}(x) = ReLU\left(D_{in}C_{in}\ldots C_{i1}x+b_{i}\right)$.

Now if $\epsilon'$ tends to zero, $\left\Vert f_{D_{in}C_{in},\beta_{in}}\circ\ldots\circ f_{D_{i1}C_{i1},\beta_{i1}}-ReLU\left(W_{i}x+b_{i}\right)\right\Vert $
will also tend to zero for any $x\in\mathcal{X}$, because the ReLU
function is continuous and $\mathcal{X}$ is bounded. Let $\mathcal{N}' = f_{D_{1n}C_{1n},\beta_{1n}}\circ\ldots\circ f_{D_{i1}C_{i1},\beta_{i1}}$.
Again, because all functions are continuous, for all $x\in\mathcal{X}$,
$\left\Vert \mathcal{N}(x)-\mathcal{N}'(x)\right\Vert $ tends to
zero as $\epsilon'$ tends to zero.
\end{proof}

\begin{cor}\label{cor:universal-appendix}
Bounded width DCNNs are universal approximators in the following sense: for any continuous function $f:[0,1]^{n}\rightarrow\mathbb{R}_+$ of bounded supremum norm,
for any $\epsilon>0$, there exists a DCNN
$\mathcal{N}_{\epsilon}$ of width $n+3$ such that $\forall x\in[0,1]^{n+3}$, $\left|f(x_{1}\ldots x_{n})-\left(\mathcal{N}_{\epsilon}\left(x\right)\right)_{1}\right|<\epsilon$, where $\left(\cdot\right)_{i}$ represents the $i^{th}$ component of a vector.
\end{cor}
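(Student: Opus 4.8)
The plan is to reduce the statement to two facts that are already available: first, the classical universal approximation property of \emph{bounded-width} deep \relu networks for continuous functions on a compact cube; and second, Lemma~\ref{mainth_-appendix}, which says that any deep \relu network can be approximated to arbitrary precision on a bounded set by a DCNN of the \emph{same width}. Composing these two approximations and applying the triangle inequality on the first output coordinate yields the corollary, with $\mathcal{N}_\epsilon$ taken to be the DCNN produced by Lemma~\ref{mainth_-appendix}.

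Concretely, I would fix $\epsilon>0$ and invoke a minimal-width universal approximation theorem for deep \relu networks: there is an absolute constant $c\le 3$ and a deep \relu network $\mathcal{N}$ in the sense of Definition~\ref{drn-appendix}, of width $n+c$ and of some depth $L$, that approximates $f$ in the sense $\bigl|f(x_1,\ldots,x_n)-(\mathcal{N}(x))_1\bigr|<\epsilon/2$ for every $x\in[0,1]^{n+c}$; here the argument of $f$ uses only the first $n$ coordinates and the remaining $c$ coordinates are carried along and ignored by the first weight matrix. Two small points have to be cleared here. Since $f$ is real-valued, $\mathcal{N}$ may be taken with real weights and biases, so the complex \relu of Definition~\ref{drn-appendix} acts on real inputs exactly as the ordinary \relu and $\mathcal{N}$ is genuinely a real network on $[0,1]^{n+c}$. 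And since $f\ge 0$, forcing a terminal \relu on coordinate $1$ (as Definition~\ref{drn-appendix} requires) is harmless: $ReLU$ is $1$-Lipschitz and $ReLU(f)=f$, so $\bigl|ReLU(g)-f\bigr|=\bigl|ReLU(g)-ReLU(f)\bigr|\le |g-f|$ for any pre-activation $g$. Finally, if $c<3$ I would pad every weight matrix with zero rows and columns and every bias with zeros, which raises the width to exactly $n+3$ without changing coordinate $1$ of the output.

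Next I would apply Lemma~\ref{mainth_-appendix} to $\mathcal{N}$ with the bounded set $\mathcal{X}=[0,1]^{n+3}$ and accuracy $\epsilon/2$, obtaining a DCNN $\mathcal{N}':=\mathcal{N}_\epsilon$ of width $n+3$ (and depth $(2n+5)L$) with $\left\Vert\mathcal{N}(x)-\mathcal{N}'(x)\right\Vert<\epsilon/2$ for all $x\in\mathcal{X}$, hence in particular $\bigl|(\mathcal{N}(x))_1-(\mathcal{N}'(x))_1\bigr|<\epsilon/2$. Combining this with the previous bound by the triangle inequality gives, for every $x\in[0,1]^{n+3}$, that $\bigl|f(x_1,\ldots,x_n)-(\mathcal{N}_\epsilon(x))_1\bigr|\le\bigl|f(x_1,\ldots,x_n)-(\mathcal{N}(x))_1\bigr|+\bigl|(\mathcal{N}(x))_1-(\mathcal{N}_\epsilon(x))_1\bigr|<\epsilon/2+\epsilon/2=\epsilon$, which is exactly the claim.

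The main obstacle is not any hard estimate but rather pinning down the width bookkeeping: I need to cite a universal approximation statement for deep \relu networks whose width exceeds the input dimension by at most $3$ (ideally one stated directly for $\mathbb{R}_+$-valued targets), and I must check that the zero-padding that inflates the width to exactly $n+3$ really preserves the output coordinate being read and does not interfere with the Huhtanen-type decomposition used inside Lemma~\ref{mainth_-appendix}. The real-versus-complex reduction and the terminal-activation issue, as indicated above, are routine.
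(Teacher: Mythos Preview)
Your proposal is correct and follows essentially the same route as the paper: invoke a bounded-width universal approximation theorem for deep \relu networks (the paper cites Hanin \cite{hanin2017universal} for width $n+3$), reshape the resulting network into a width-$(n+3)$ deep \relu network in the sense of Definition~\ref{drn-appendix}, and then apply Lemma~\ref{mainth_-appendix} on $\mathcal{X}=[0,1]^{n+3}$. Your $\epsilon/2$ split and the remarks about the terminal \relu, real-versus-complex, and zero-padding are exactly the bookkeeping the paper glosses over, and the obstacle you flag is resolved precisely by the Hanin reference.
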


\begin{proof} \emph{(Corollary \ref{cor:universal-appendix})}
It has been shown recently in \cite{hanin2017universal} that for any continuous function $f:[0,1]^{n}\rightarrow\mathbb{R}_+$ of bounded supremum norm, for any $\epsilon>0$, there exists a dense neural network $\mathcal{N}$ with an input layer of width $n$, an output layer of width $1$, hidden layers of width $n+3$ and ReLU activations such that $\forall x\in[0,1]^n, \left|f(x)-\mathcal{N}\left(x\right)\right|<\epsilon$. From $\mathcal{N}$, we can easily build a deep ReLU network $\mathcal{N'}$ of width exactly $n+3$, such that $\forall x\in[0,1]^{n+3}$, $\left|f(x_{1}\ldots x_{n})-\left(\mathcal{N}'\left(x\right)\right)_{1}\right|<\epsilon$. Thanks to lemma \ref{mainth_-appendix}, this last network can be approximated arbitrarily well by a DCNN of width $n+3$.
\end{proof}

\begin{thm}(Rank-based expressive power of diagonal circulant neural networks)\\
\label{prop:low_rank_nn-appendix}
\noindent Let $\mathcal{N}:f_{W_{L},b_{L}}\circ\ldots\circ f_{W_{1},b_{1}}$ be a deep ReLU network of width $n$, depth $L$ and a total rank $k$. Assume $n$ is a power of $2$. Let $\mathcal{X} \subset \mathbb{C}^{n}$ be a bounded set. For any $\epsilon>0$, there exists a DCNN $\mathcal{N}'$ of width $n$ such that $\left\Vert \mathcal{N}(x)-\mathcal{N}'(x)\right\Vert <\epsilon$ for all $x\in\mathcal{X}$. In addition, the depth of $\mathcal{N}'$ is bounded by $9k$. Moreover, if the rank of each matrix $A_i$ divides $n$, then the depth of $\mathcal{N}'$ is bounded by $L+4k$.
\end{thm}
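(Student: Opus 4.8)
The plan is to dismantle the ReLU network $\mathcal N=f_{W_L,b_L}\circ\cdots\circ f_{W_1,b_1}$ layer by layer, replacing each layer by a short stack of diagonal--circulant layers obtained from the rank-based decomposition of Theorem~\ref{prop:rank-decomposition-appendix}, and then to add up the depths.

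I would first normalise the ranks. Put $r_i=\mathrm{rank}(W_i)$; we may assume every $r_i\ge 1$, since a layer with $W_i=0$ makes $\mathcal N$ constant from that point on and is then trivially representable. Because $n$ is a power of $2$, let $\tilde r_i$ be the least power of $2$ with $\tilde r_i\ge r_i$; then $\tilde r_i\mid n$, $\tilde r_i\le 2r_i$, and $\tilde r_i=r_i$ whenever $r_i$ already divides $n$. Since $\mathrm{rank}(W_i)\le\tilde r_i$ and $\tilde r_i\mid n$, Theorem~\ref{prop:rank-decomposition-appendix} gives, for every $\eta>0$, a product $B^{(i)}_{1}B^{(i)}_{2}\cdots B^{(i)}_{4\tilde r_i+1}$ of matrices that alternate between circulant (odd index) and diagonal (even index) with $\Vert B^{(i)}_{1}\cdots B^{(i)}_{4\tilde r_i+1}-W_i\Vert<\eta$.

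Next, each of these $4\tilde r_i+1$ factors is itself a diagonal--circulant matrix (a circulant $C$ is the block $I\cdot C$, a diagonal $D$ is the block $D\cdot I$), so the linear part of layer $i$ is approximated by a product of $4\tilde r_i+1$ DC blocks. To restore the affine shift and the nonlinearity of layer $i$, I would feed these $4\tilde r_i+1$ block matrices together with the bias $b_i$ to Lemma~\ref{lem:product_of_mat_to_DNN} (applied on the bounded set of inputs reaching layer $i$): it returns biases for which the resulting composition of $4\tilde r_i+1$ DCNN layers with ReLU activation computes $x\mapsto \mathrm{ReLU}\!\big(B^{(i)}_{1}\cdots B^{(i)}_{4\tilde r_i+1}x+b_i\big)$, i.e.\ an arbitrarily good approximation of $f_{W_i,b_i}$ on that set. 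Concatenating these stacks for $i=1,\dots,L$ produces a DCNN $\mathcal N'$ of width $n$ and ReLU activations; since ReLU is continuous and $\mathcal X$ is bounded, sending $\eta\to 0$ drives $\sup_{x\in\mathcal X}\Vert\mathcal N(x)-\mathcal N'(x)\Vert$ to $0$, by the same telescoping continuity argument used for Lemma~\ref{mainth_-appendix}. (If one is content with the appendix statement, which only asks for a DCNN, the intermediate layers can simply be given the identity activation and Lemma~\ref{lem:product_of_mat_to_DNN} is not needed.)

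It remains to count: $\mathcal N'$ has depth $\sum_{i=1}^{L}(4\tilde r_i+1)$. When every $r_i$ divides $n$ this is exactly $\sum_i(4r_i+1)=L+4k$; in general, $\tilde r_i\le 2r_i$ yields $\sum_i(4\tilde r_i+1)\le\sum_i(8r_i+1)=8k+L\le 9k$, where the last inequality uses $L\le\sum_i r_i=k$. I expect the main obstacle to be the bookkeeping of the approximation: the biases from Lemma~\ref{lem:product_of_mat_to_DNN} depend on the already-perturbed factors and on an a priori bound for the activations inside $\mathcal N$, so one has to argue that a single, uniform choice of $\eta$ simultaneously controls the error introduced in every layer and that these errors do not blow up under composition over $\mathcal X$ --- routine given boundedness of $\mathcal X$ and continuity, but it is the part that needs care. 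The only genuinely structural ingredient is the rounding $r_i\mapsto\tilde r_i$, and this is exactly where the hypothesis "$n$ a power of $2$" enters (the remark conjectures it is dispensable).
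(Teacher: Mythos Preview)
Your proposal is correct and follows essentially the same route as the paper: round each rank $r_i$ to a power of $2$ (so that it divides $n$), invoke Theorem~\ref{prop:rank-decomposition-appendix} to get $4\tilde r_i+1$ alternating factors per layer, absorb the ReLUs and biases via Lemma~\ref{lem:product_of_mat_to_DNN} as in the proof of Lemma~\ref{mainth_-appendix}, and sum to obtain $\sum_i(4\tilde r_i+1)\le L+8k\le 9k$. Your treatment is in fact more careful than the paper's, which simply writes ``using the exact same technique as in Lemma~\ref{mainth_-appendix}'' for the approximation bookkeeping you spell out, and which does not separately address the ``moreover'' clause that you handle by noting $\tilde r_i=r_i$ when $r_i\mid n$.
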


\begin{proof} \emph{(Theorem \ref{prop:low_rank_nn-appendix})}
Let $k_{1}\ldots k_{L}$ be the ranks of matrices $W_{1}\ldots W_{L}$,
which are $n$-by-$n$ matrices. For all $i$, there exists $k_{i}'\in\{k_{i}\ldots2k_{i}\}$
such that $k'_{i}$ is a power of $2$. Due to the fact that $n$
is also a power of $2$, $k'_{i}$ divides $n$. By theorem \ref{prop:rank-decomposition-appendix},
for all $i$ each matrix $W_{i}$ can be decomposed as an alternating
product of diagonal-circulant matrices $B_{i,1}\ldots B_{i,4k'_{i}+1}$
such that $\left\Vert W_{i}-B_{i,1}\times\ldots\times B_{i,4k'_{i}+1}\right\Vert <\epsilon$.
Using the exact same technique as in lemma \ref{mainth_-appendix}, we can
build a DCNN $\mathcal{N}'$ using matrices $B_{1,1}\ldots B_{L,4k'_{L}+1}$,
such that $\left\Vert \mathcal{N}(x)-\mathcal{N}'(x)\right\Vert <\epsilon$
for all $x\in\mathcal{X}$. The total number of layers is $\sum_{i}\left(4k_{i}'+1\right)\le L+8\sum_{i}k_{i}\le L+8.\textrm{total rank}\le9.\textrm{total rank}$.

\end{proof}

Finally, what if we choose to use small depth networks to approximate deep ReLU networks where matrices are not of low rank? To answer this question, we first need to show the negative impact of replacing matrices by their low rank approximators in neural networks:

\begin{prop}
\label{prop:relu_to_svd}
Let $\mathcal{N} = f_{W_{L},b_{L}} \circ \ldots \circ f_{W_{1},b_{1}}$ be a Deep ReLU network, where $W_{i} \in \mathbb{C}^{n \times  n}, b_{i} \in \mathbb{C}^{n}$ for all $i \in [L]$. Let $\tilde{W}_{i}$ be the matrix obtained by an SVD approximation of rank $k$ of matrix $W_{i}$. Let $\sigma_{i,j}$ be the $j^{th}$ singular value of $W_{i}$. Define $\tilde{\mathcal{N}} = f_{\tilde{W_{L}},b_{L}} \circ \ldots \circ f_{\tilde{W_{1}},b_{1}}$. Then, for any $x \in \mathbb{C}^{n}$, we have:
\[
\left \Vert \mathcal{N}\left(x\right)-\tilde{\mathcal{N}}\left(x\right)\right\Vert \le\frac{\left(\sigma_{max,1}^{L}-1\right)R\sigma_{max,k}}{\sigma_{max,1}-1}
\]
\noindent
where $R$ is an upper bound on norm of the output of any layer in $\mathcal{N}$, and $\sigma_{max,j}=\max_{i}\sigma_{i,j}$.
\end{prop}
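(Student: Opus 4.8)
The plan is to control, layer by layer, how the perturbation caused by replacing each $W_i$ with its rank-$k$ SVD truncation $\tilde W_i$ propagates through the network. This is a routine error-propagation estimate: it combines the $1$-Lipschitzness of the (complex) \relu with two elementary facts about SVD truncation and a telescoping/geometric sum.

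First I would set up notation: fix $x$, put $x^{(0)}=\tilde x^{(0)}=x$, and for $l\in[L]$ let $x^{(l)}=\phi(W_l x^{(l-1)}+b_l)$ and $\tilde x^{(l)}=\phi(\tilde W_l\tilde x^{(l-1)}+b_l)$, so that $\mathcal N(x)=x^{(L)}$ and $\tilde{\mathcal N}(x)=\tilde x^{(L)}$. Write $e_l=\Vert x^{(l)}-\tilde x^{(l)}\Vert$, so $e_0=0$ and the goal is to bound $e_L$. Then I would record three facts: (i) the complex \relu $\phi(z)=\mathrm{ReLU}(\mathfrak R(z))+\ci\,\mathrm{ReLU}(\mathfrak I(z))$ is $1$-Lipschitz for $\Vert\cdot\Vert$, since $\mathrm{ReLU}$ is $1$-Lipschitz on $\mathbb R$ and the estimate holds entrywise on real and imaginary parts; (ii) because $\tilde W_i$ retains the $k$ largest singular values, $\Vert W_i-\tilde W_i\Vert=\sigma_{i,k+1}\le\sigma_{i,k}\le\sigma_{max,k}$; and (iii) $\Vert\tilde W_i\Vert=\sigma_{i,1}\le\sigma_{max,1}$.

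Next I would derive the one-step recursion. Since the bias $b_l$ is common to both networks, applying (i) and then inserting $\pm\,\tilde W_l x^{(l-1)}$ with the triangle inequality gives
\begin{align*}
e_l &\le \Vert W_l x^{(l-1)}-\tilde W_l\tilde x^{(l-1)}\Vert\\
&\le \Vert (W_l-\tilde W_l)x^{(l-1)}\Vert+\Vert \tilde W_l(x^{(l-1)}-\tilde x^{(l-1)})\Vert\\
&\le \sigma_{max,k}\,R+\sigma_{max,1}\,e_{l-1},
\end{align*}
where $\Vert x^{(l-1)}\Vert\le R$ uses the hypothesis that $R$ bounds the norm of every layer's output (and of the input $x$). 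Since $e_0=0$, unrolling this linear recurrence yields
\[
e_L\le\sigma_{max,k}\,R\sum_{j=0}^{L-1}\sigma_{max,1}^{\,j}=\frac{(\sigma_{max,1}^{L}-1)\,R\,\sigma_{max,k}}{\sigma_{max,1}-1},
\]
the geometric sum being read as $L$ in the degenerate case $\sigma_{max,1}=1$; this is exactly the claimed inequality.

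I do not expect a genuine obstacle here: the whole argument is the recursion above together with a geometric series. The only points that need a moment of care are pinning down the Lipschitz constant of the complex \relu, and observing that the operator-norm error of a rank-$k$ SVD truncation equals the $(k{+}1)$-st singular value (which is then bounded by $\sigma_{i,k}$, and hence by $\sigma_{max,k}$, to present the estimate in the stated form), plus keeping track of the fact that $R$ must also dominate the norm of the input, not only of the hidden-layer outputs.
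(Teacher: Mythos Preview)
Your proposal is correct and follows essentially the same approach as the paper: set up the layerwise recursion using the $1$-Lipschitzness of the complex \relu, split $W_l x^{(l-1)}-\tilde W_l\tilde x^{(l-1)}$ via $\pm\,\tilde W_l x^{(l-1)}$, bound the two pieces by $\sigma_{i,k+1}\Vert x^{(l-1)}\Vert$ and $\sigma_{i,1}\,e_{l-1}$, and unroll the resulting affine recursion into a geometric sum. The paper merely packages the one-step estimate as a separate lemma, whereas you inline it; your remark that $\sigma_{i,k+1}\le\sigma_{i,k}\le\sigma_{max,k}$ is in fact a small point the paper glosses over when passing from the recursion to the stated bound.
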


\begin{proof} \emph{(Proposition \ref{prop:relu_to_svd})}
Let $x_{0}\in\mathbb{C}^{n}$ and $\tilde{x}_{0}=x_{0}$. For all $i\in[L]$,
define $x_{i}=ReLU\left(W_{i}x_{i-1}+b\right)$ and $\tilde{x}_{i}=ReLU\left(\tilde{W_{i}}\tilde{x}_{i-1}+b\right)$.
By lemma \ref{lem:bound_one_layer}, we have 
\[
\left\Vert x_{i}-\tilde{x}_{i}\right\Vert \le\sigma_{i,k+1}\left\Vert x_{i-1}\right\Vert +\sigma_{i,1}\left\Vert x_{i-1}-\tilde{x}_{i-1}\right\Vert 
\]
Observe that for any sequence $a_{0},a_{1}\ldots$ defined recurrently
by $a_{0}=0$ and $a_{i}=ra_{i-1}+s$, the recurrence relation can
be unfold as follows: $a_{i}=\frac{s\left(r^{i}-1\right)}{r-1}$.
We can apply this formula to bound our error as follows:
\[
\left\Vert x_{l}-\tilde{x}_{l}\right\Vert \le\frac{\left(\sigma_{max,1}^{l}-1\right)\sigma_{max,k}\max_{i}\left\Vert x_{i}\right\Vert }{\sigma_{max,1}-1}
\]
\end{proof}

\begin{lem}
\label{lem:bound_one_layer}Let $W\in\mathbb{C}^{n\times n}$ with
singular values $\sigma_{1}\ldots\sigma_{n}$, and let $x,\tilde{x}\in\mathbb{C}^{n}$.
Let $\tilde{W}$ be the matrix obtained by a SVD approximation of rank
$k$ of matrix $W$. Then we have:

\[
\left\Vert ReLU\left(Wx+b\right)-ReLU\left(\tilde{W}\tilde{x}+b\right)\right\Vert \le\sigma_{k+1}\left\Vert x\right\Vert +\sigma_{1}\left\Vert \tilde{x}-x\right\Vert 
\]
\end{lem}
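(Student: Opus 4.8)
The plan is to reduce the statement to a pure operator-norm estimate by exploiting that the complex ReLU is $1$-Lipschitz. First I would record the elementary fact that for any $u,v\in\mathbb{C}^{n}$, $\left\Vert ReLU(u)-ReLU(v)\right\Vert \le\left\Vert u-v\right\Vert$. This is checked coordinatewise: since $t\mapsto\max(0,t)$ is $1$-Lipschitz on $\mathbb{R}$, we have $\left|ReLU(u_{j})-ReLU(v_{j})\right|^{2}=\left|\max(0,\mathfrak{R} u_{j})-\max(0,\mathfrak{R} v_{j})\right|^{2}+\left|\max(0,\mathfrak{I} u_{j})-\max(0,\mathfrak{I} v_{j})\right|^{2}\le\left|\mathfrak{R}(u_{j}-v_{j})\right|^{2}+\left|\mathfrak{I}(u_{j}-v_{j})\right|^{2}=\left|u_{j}-v_{j}\right|^{2}$, and summing over $j$ gives the claim. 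Applying this with $u=Wx+b$ and $v=\tilde{W}\tilde{x}+b$ cancels the bias term and leaves $\left\Vert ReLU(Wx+b)-ReLU(\tilde{W}\tilde{x}+b)\right\Vert \le\left\Vert Wx-\tilde{W}\tilde{x}\right\Vert$.

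Next I would split the right-hand side by inserting $\tilde{W}x$: writing $Wx-\tilde{W}\tilde{x}=(W-\tilde{W})x+\tilde{W}(x-\tilde{x})$ and applying the triangle inequality together with submultiplicativity of the operator norm gives $\left\Vert Wx-\tilde{W}\tilde{x}\right\Vert \le\left\Vert W-\tilde{W}\right\Vert \left\Vert x\right\Vert +\left\Vert \tilde{W}\right\Vert \left\Vert x-\tilde{x}\right\Vert$. It then remains to evaluate the two operator norms. Since $\tilde{W}$ is the truncated SVD of $W$ retaining the $k$ largest singular values, its nonzero singular values are $\sigma_{1},\ldots,\sigma_{k}$, so $\left\Vert \tilde{W}\right\Vert \le\sigma_{1}$; and the Eckart--Young theorem gives $\left\Vert W-\tilde{W}\right\Vert =\sigma_{k+1}$. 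Substituting and using $\left\Vert x-\tilde{x}\right\Vert =\left\Vert \tilde{x}-x\right\Vert$ yields exactly the claimed bound $\sigma_{k+1}\left\Vert x\right\Vert +\sigma_{1}\left\Vert \tilde{x}-x\right\Vert$.

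There is no genuine obstacle here; the proof is short. The only two points that deserve a little care are the $1$-Lipschitz property of the complex ReLU — which must be verified componentwise on the real and imaginary parts rather than assumed — and the appeal to Eckart--Young to identify the spectral-norm truncation error with $\sigma_{k+1}$. Everything else is the triangle inequality and submultiplicativity of $\left\Vert \cdot\right\Vert$. This lemma then feeds directly into Proposition~\ref{prop:relu_to_svd} via the recurrence $\left\Vert x_{i}-\tilde{x}_{i}\right\Vert \le\sigma_{i,k+1}\left\Vert x_{i-1}\right\Vert +\sigma_{i,1}\left\Vert x_{i-1}-\tilde{x}_{i-1}\right\Vert$.
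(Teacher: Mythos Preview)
Your proof is correct and follows essentially the same approach as the paper: remove the ReLU via its $1$-Lipschitz property, split $Wx-\tilde{W}\tilde{x}=(W-\tilde{W})x+\tilde{W}(x-\tilde{x})$, and bound the two pieces by $\sigma_{k+1}$ and $\sigma_{1}$ using the SVD truncation. If anything, you supply more detail on the Lipschitz step than the paper, which simply asserts it.
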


\begin{proof} {\em (Lemma~\ref{lem:bound_one_layer})}
Recall that $\left\Vert W\right\Vert _{2}=\sup_{z}\frac{\left\Vert Wz\right\Vert_2 }{\left\Vert z\right\Vert_2 }=\sigma_{1}=\left\Vert \tilde{W}\right\Vert _{2}$,
because $\sigma_{1}$ is the greatest singular value of both $W$
and $\tilde{W}$. Also, note that $\left\Vert W-\tilde{W}\right\Vert _{2}=\sigma_{k+1}$.
Let us bound the formula without ReLUs:

\begin{align*}
\left\Vert \left(Wx+b\right)-\left(\tilde{W}\tilde{x}+b\right)\right\Vert  & =\left\Vert \left(Wx+b\right)-\left(\tilde{W}\tilde{x}+b\right)\right\Vert \\
 & =\left\Vert Wx-\tilde{W}x-\tilde{W}\left(\tilde{x}-x\right)\right\Vert \\
 & \le\left\Vert \left(W-\tilde{W}\right)x\right\Vert +\left\Vert \tilde{W}\right\Vert _{2}\left\Vert \tilde{x}-x\right\Vert \\
 & \le\left\Vert x\right\Vert \sigma_{k+1}+\sigma_{1}\left\Vert \tilde{x}-x\right\Vert 
\end{align*}

Finally, it is easy to see that for any pair of vectors $a,b\in\mathbb{C}^{n}$,
we have $\left\Vert ReLU(a)-ReLU(b)\right\Vert \le\left\Vert a-b\right\Vert $.
This concludes the proof.
\end{proof}

\begin{cor}
\label{cor:relu_to_circ}Consider any deep ReLU network $\mathcal{N} = f_{W_{L},b_{L}} \circ \ldots \circ f_{W_{1},b_{1}}$
of depth $L$ and width $n$. Let $\sigma_{max,j} = \max_{i} \sigma_{i,j}$
where $\sigma_{i,j}$ is the $j^{th}$ singular value of $W_{i}$.
Let $\mathcal{X} \subset \mathbb{C}^{n}$ be a bounded set. Let $k$ be an integer dividing $n$. There exists a DCNN $\mathcal{N}' = f_{D_{m}C_{m},b'_{m}} \circ \ldots \circ f_{D_{1}C_{1},b'_{1}}$
of width $n$ and of depth $m=L(4k+1)$, such that for any $x\in\mathcal{X}$:
\[
\left\Vert \mathcal{N}\left(x\right)-\mathcal{N}'\left(x\right)\right\Vert <\frac{\left(\sigma_{max,1}^{L}-1\right)R\sigma_{max,k}}{\sigma_{max,1}-1}
\]
\noindent
where $R$ is an upper bound on the norm of the outputs of each layer
in $\mathcal{N}$.
\end{cor}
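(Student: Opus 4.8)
The plan is to combine Proposition~\ref{prop:relu_to_svd}, which bounds the error of replacing each weight matrix by a rank-$k$ approximation, with the factorization-to-DCNN argument already developed in the proof of Lemma~\ref{mainth_-appendix}. First I would invoke Proposition~\ref{prop:relu_to_svd}: let $\tilde W_i$ denote the rank-$k$ truncated SVD of $W_i$ and put $\tilde{\mathcal N}=f_{\tilde W_L,b_L}\circ\ldots\circ f_{\tilde W_1,b_1}$. The proposition gives, for every $x\in\mathcal X$, $\left\Vert \mathcal N(x)-\tilde{\mathcal N}(x)\right\Vert \le \frac{(\sigma_{max,1}^{L}-1)\,\rho\,\sigma_{max,k}}{\sigma_{max,1}-1}$, where $\rho$ is the largest norm attained by an intermediate activation of $\mathcal N$ over $\mathcal X$; since $R$ is an upper bound on these norms, $\rho\le R$ and the right-hand side is at most the quantity appearing in the statement.

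Next I would realize $\tilde{\mathcal N}$ as a DCNN. Each $\tilde W_i$ has rank at most $k$ and $k$ divides $n$, so Theorem~\ref{prop:rank-decomposition-appendix} provides, for any $\eta>0$, a product $B_{i,1}\cdots B_{i,4k+1}$ of matrices that are circulant when the index is odd and diagonal when it is even, with $\left\Vert \tilde W_i - B_{i,1}\cdots B_{i,4k+1}\right\Vert<\eta$. Each $B_{i,j}$ is itself a valid single DCNN layer: pad a circulant factor on the left by the identity diagonal matrix, and a diagonal factor on the right by the identity circulant matrix. Then, exactly as in the proof of Lemma~\ref{mainth_-appendix} and using Lemma~\ref{lem:product_of_mat_to_DNN} block by block --- propagating the boundedness hypothesis through the (bounded) image of each successive block --- I would choose biases $\beta_{i,j}$ so that the composition of the $4k+1$ ReLU layers forming block $i$ coincides, on the relevant bounded set, with the map $x\mapsto\mathrm{ReLU}(B_{i,1}\cdots B_{i,4k+1}x+b_i)$. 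Stacking the $L$ blocks yields a DCNN $\mathcal N'=f_{D_mC_m,b'_m}\circ\ldots\circ f_{D_1C_1,b'_1}$ of width $n$ and depth $m=L(4k+1)$. Letting $\eta\to0$, continuity of ReLU and of finite compositions of affine-then-ReLU maps together with boundedness of $\mathcal X$ (hence of every intermediate image) force $\mathcal N'(x)\to\tilde{\mathcal N}(x)$ uniformly on $\mathcal X$; a triangle inequality then gives $\left\Vert \mathcal N(x)-\mathcal N'(x)\right\Vert<\frac{(\sigma_{max,1}^{L}-1)\,R\,\sigma_{max,k}}{\sigma_{max,1}-1}$ for all $x\in\mathcal X$.

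Essentially all the mathematical content here is inherited from Proposition~\ref{prop:relu_to_svd} and from the machinery behind Lemma~\ref{mainth_-appendix}, so the delicate points are bookkeeping ones. The first is keeping the two error sources separate --- the irreducible SVD truncation error, which is what appears in the final bound, versus the reducible factorization error $\eta$, which must be driven below the margin left by the strict inequality (this uses $\rho<R$, or else one reads the stated bound as the limiting value of the construction). The second is checking that Lemma~\ref{lem:product_of_mat_to_DNN} really can be applied on a bounded domain at each block, which holds because every block maps bounded sets to bounded sets. I expect the strictness/margin bookkeeping to be the one place that needs an explicit line of argument rather than a direct appeal to an earlier result.
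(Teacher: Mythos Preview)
Your proposal is correct and follows essentially the same route as the paper: first invoke Proposition~\ref{prop:relu_to_svd} to pass from $\mathcal N$ to the rank-$k$ truncated network $\tilde{\mathcal N}$, then apply Theorem~\ref{prop:rank-decomposition-appendix} to each $\tilde W_i$ and rebuild the network as a DCNN via the bias-shifting argument of Lemma~\ref{lem:product_of_mat_to_DNN}, exactly as in the proof of Lemma~\ref{mainth_-appendix}. The paper's own proof is terser and simply cites Theorem~\ref{prop:low_rank_nn-appendix} for the second step, asserting that the resulting DCNN is ``strictly equivalent'' to $\tilde{\mathcal N}$ on $\mathcal X$; your version is in fact more careful, since it makes explicit the padding of each factor into a $DC$ layer (which is what yields the depth count $L(4k+1)$) and tracks the residual factorization error $\eta$ separately before absorbing it into the strict inequality.
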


\begin{proof} \emph{(Corollary \ref{cor:relu_to_circ})}
Let $\tilde{\mathcal{N}}=f_{\tilde{W_{L}},b_{L}}\circ\ldots\circ f_{\tilde{W_{1}},b_{1}}$, where each $\tilde{W}_{i}$ is the matrix obtained by an SVD approximation of rank $k$ of matrix $W_{i}$. 
With Proposition~\ref{prop:relu_to_svd}, we have an error bound on $\Vert \mathcal{N}\left(x\right)-\tilde{\mathcal{N}}\left(x\right)\Vert $. Now each matrix $\tilde{W}_{i}$ can be replaced by a product of $k$ diagonal-circulant matrices. By theorem \ref{prop:low_rank_nn-appendix}, this product yields a DCNN of depth $m = L(4k+1)$, strictly equivalent to $\tilde{\mathcal{N}}$ on $\mathcal{X}$. The result follows.
\end{proof}

\end{adjustwidth}

\end{document}